\let\footnote=\endnote
\newtheorem{ex}{Example}
\def\*#1{\bm{#1}}
\newcommand{\eqnum}{\refstepcounter{equation}\textup{\tagform@{\theequation}}}
\begin{document}
%%%%%%%%%%%%%%%%

% Outcomment only when entries are known. Otherwise leave as is and
%   default values will be used.
%\setcounter{page}{1}
%\VOLUME{00}%
%\NO{0}%
%\MONTH{Xxxxx}% (month or a similar seasonal id)
%\YEAR{0000}% e.g., 2005
%\FIRSTPAGE{000}%
%\LASTPAGE{000}%
%\SHORTYEAR{00}% shortened year (two-digit)
%\ISSUE{0000} %
%\LONGFIRSTPAGE{0001} %
%\DOI{10.1287/xxxx.0000.0000}%

% Author's names for the running heads
% Sample depending on the number of authors;
% \RUNAUTHOR{Jones}
% \RUNAUTHOR{Jones and Wilson}
% \RUNAUTHOR{Jones, Miller, and Wilson}
% \RUNAUTHOR{Jones et al.} % for four or more authors
% Enter authors following the given pattern:
\RUNAUTHOR{Bertsimas and Digalakis Jr}

% Title or shortened title suitable for running heads. Sample:
% \RUNTITLE{Bundling Information Goods of Decreasing Value}
% Enter the (shortened) title:
\RUNTITLE{Improving Stability in Decision Tree Models}

% Full title. Sample:
% \TITLE{Bundling Information Goods of Decreasing Value}
% Enter the full title:
\TITLE{Improving Stability in Decision Tree Models}

\ARTICLEAUTHORS{%
\AUTHOR{Dimitris Bertsimas, Vassilis Digalakis Jr}
\AFF{Sloan School of Management and Operations Research Center, Massachusetts Institute of Technology, Cambridge, MA 02139\\ \EMAIL{dbertsim@mit.edu,vvdig@mit.edu}} %, \URL{}}
} % end of the block

\ABSTRACT{
Owing to their inherently interpretable structure, decision trees are commonly used in applications where interpretability is essential. 
Recent work has focused on improving various aspects of decision trees, including their predictive power and robustness; however, their instability, albeit well-documented, has been addressed to a lesser extent.
In this paper, we take a step towards the stabilization of decision tree models through the lens of real-world health care applications due to the relevance of stability and interpretability in this space.
We introduce a new distance metric for decision trees and use it to determine a tree's level of stability. 
We propose a novel methodology to train stable decision trees and investigate the existence of trade-offs that are inherent to decision tree models --- including between stability, predictive power, and interpretability. We demonstrate the value of the proposed methodology through an extensive quantitative and qualitative analysis of six case studies from real-world health care applications, and we show that, on average, with a small 4.6\% decrease in predictive power, we gain a significant 38\% improvement in the model's stability.
}%

% Sample
%\KEYWORDS{deterministic inventory theory; infinite linear programming duality;
%  existence of optimal policies; semi-Markov decision process; cyclic schedule}

% Fill in data. If unknown, outcomment the field
\KEYWORDS{Decision Trees, Stability, Interpretability, Machine Learning, Health Care} 
%\HISTORY{This paper was first posted on arxiv on 02/22/2021 and submitted to Operations Research on 06/30/2021.}

\maketitle
%%%%%%%%%%%%%%%%%%%%%%%%%%%%%%%%%%%%%%%%%%%%%%%%%%%%%%%%%%%%%%%%%%%%%%

% Samples of sectioning (and labeling) in OPRE
% NOTE: (1) \section and \subsection do NOT end with a period
%       (2) \subsubsection and lower need end punctuation
%       (3) capitalization is as shown (title style).
%
%\section{Introduction.}\label{intro} %%1.
%\subsection{Duality and the Classical EOQ Problem.}\label{class-EOQ} %% 1.1.
%\subsection{Outline.}\label{outline1} %% 1.2.
%\subsubsection{Cyclic Schedules for the General Deterministic SMDP.}
%  \label{cyclic-schedules} %% 1.2.1
%\section{Problem Description.}\label{problemdescription} %% 2.

% Text of your paper here

\section{Introduction} 

Machine learning (ML) algorithms get increasingly integrated into our lives \citep{jordan2015machine}. Traditionally, the primary objective of ML has been to improve predictive power. The recent adoption of ML in high-stakes applications, ranging from health care \citep{obermeyer2016predicting,char2018implementing} to climate change \citep{rolnick2022tackling,cheong2022artificial} has highlighted the need to achieve additional, often competing, objectives, including interpretability, robustness, and stability. 
Loosely defined, an interpretable ML model allows humans to have an understanding of the logic behind the model choices \citep{murdoch2019definitions}. %allows for the extraction of relevant knowledge concerning relationships either contained in data or learned by the model \citep{murdoch2019definitions}. 
A robust ML model is protected against noise and different types of uncertainties in the data \citep{xu2009robustness}. 
A stable ML model is not significantly affected by small changes in the data \citep{breiman1996heuristics}.

Decision tree models \citep{breiman1984classification} are commonly used in high-stakes applications. Owing to their graphical visualization and discrete and sequential structure, which mimics the human thought process by successively asking questions that adapt based on previous answers, decision trees are inherently interpretable. In the words of Leo Breiman \citep{breiman2001statistical}, ``on interpretability, trees rate an A$^+$.'' The excellent interpretability of decision tree models, however, does not come for free: they are known to suffer in terms of all other objectives. ``While trees rate an A$^+$ on interpretability, they are good, but not great, predictors. Give them, say, a B on prediction.'' As we discuss in Section\ref{ss:tree-limitations}, a lot of effort has been dedicated to materially improving decision trees' predictive power and robustness, while maintaining their interpretability \citep{bertsimas2017optimal}. Improving their stability has been addressed to a much lesser extent.

\begin{ex} \label{ex:healthcare}
Consider a health care setting where we build an interpretable ML model to support a physician's decision making. A commonly encountered situation is that the initial dataset is small but larger amounts of data become available over time as more patients' information gets recorded. Then, it is reasonable to consider retraining the ML model to boost its predictive performance and potentially incorporate new patterns in the data. However, upon retraining, the new model often could change notably, owing to the instability of the underlying ML algorithm. This situation would harm interpretability and hinder the adoption of the model by the physician.
\end{ex}

The focus of this paper is the study of the stability of decision tree models through the lens of real-world health care applications. We introduce a quantitative (and, what we believe, practical) way to measure a decision tree's stability. Building on this measure of stability, we propose a methodology that enables training more stable decision trees. We investigate the potential existence of trade-offs between the different objectives in ML which we previously described. Ultimately, we hope that this work is a step towards addressing long-standing issues concerning the stability of decision tree models, and will further enhance their adoption in high-stakes applications.

\subsection{Decision Tree Models and their Limitations} \label{ss:tree-limitations}

\paragraph{Improving predictive power and robustness.} As we already discussed, the good-but-not-great predictive power of decision trees trained using heuristic methods such as CART \citep{breiman1984classification} has been documented since early on after their inception. Recently, the use of mixed-integer optimization-based methods to learn globally optimal or near-optimal decision trees has led to notable improvements in predictive power --- to the extent that such trees often compete with state-of-the-art black-box models \citep{bertsimas2017optimal,bertsimas2019machine,aghaei2020learning,aglin2020learning,carrizosa2021mathematical}. Optimal decision trees exhibit improved stability compared to heuristic ones, especially when used in combination with feature selection techniques \citep{bertsimas2022backbone}, but still exhibit instability. Moreover, recent approaches have developed decision trees with additional desirable properties, including robustness to noise or adversarial perturbations in the data features \citep{justin2022optimal,moshkovitz2021connecting,bertsimas2019robust} and fairness \cite{aghaei2019learning}.

\paragraph{The source of instability.} In a typical model selection procedure, a ``best'' ML model is chosen from a collection of predictors obtained, e.g., using different hyperparameters. A procedure is called unstable if a small change in the data used to obtain the sequence of models can cause large changes in the best model \citep{breiman1996heuristics}. This instability occurs commonly when there are many different models crowded together that have similar predictive power, in which case a slight change in the data can cause a change from one model to another. The two models are close to each other in terms of predictive power, but can be distant in terms of their structure. \cite{breiman2001statistical} describes this situation as the ``Rashomon effect'' and the set of all such models as the ``Rashomon set;'' the term is derived from Akira Kurosawa's 1950 film ``Rashomon,'' in which a murder is described in four contradictory ways by four witnesses. \cite{xin2022exploring} propose a technique for enumerating the Rashomon set for decision trees --- without, however, any stability considerations.

% Our work fundamentally differs in that we focus on the stability-predictive power trade-off and compute trees that perform best in terms of both (``Pareto optimal trees'').

\paragraph{Are stability and interpretability at odds?} \cite{xu2011sparse} show that, for sparse regression, stability and interpretability (owing to the sparsity of the model) are at odds with each other: sparser models are shown to be less stable. However, from a practical viewpoint, we observed in \cite{bertsimas2021slowly} that the stability of sparse regression models improves by enabling them to vary smoothly over, e.g., time and controlling the distance between the respective coefficients. % --- despite the fact that both properties lead to good generalization. 
In the context of decision trees, stabilization has been traditionally achieved at the expense of interpretability via the use of boosting \citep{freund1997decision,chen2016xgboost} or bagging \citep{breiman2001random, breiman1996bagging} --- improving stability was, in fact, the primary motivation in developing Random Forest. Once again quoting \cite{breiman1996heuristics}, ``While stable procedures have desirable properties, stabilization by averaging is not a panacea. An area that needs exploration is the possibility of stabilization of procedures by changing their structure instead of averaging. An interesting research issue we are exploring is whether there is a more stable single-tree version of CART.'' In this paper, we hope to make progress toward answering this question.

\subsection{Towards More Stable Decision Trees}

\paragraph{Improving stability.} Attempts to improve decision trees' stability have largely focused on (heuristically) tweaking either the learning algorithm or the model selection procedure. In the former case, \cite{last2002improving} and \cite{mirzamomen2017framework} propose (slightly) more stable decision tree variants, such as directed graphs or trees with hyperplane splits, respectively, and \cite{aluja2003stability} propose a series of tests to prevent internal instability in the tree-growing process.  In the latter case, \cite{shannon1999combining} define a probability distribution for an equivalence class of trees and select the maximum likelihood tree structure; \cite{bertsimas2022stable} develop stable classification models by optimizing the choice of training/validation split, but their work does not materially improve the stability of decision trees. We propose a decision tree learning algorithm-agnostic \textbf{stabilization methodology}, which, by explicitly quantifying stability, allows us to identify the most stable trees in a collection thereof.

% \cite{maimon2001knowledge,last2002improving} propose a directed graph-based variant of decision trees, which is restricted to using the same feature for all nodes of the same tree level and is usually more stable (albeit less accurate). 
% \cite{mirzamomen2017framework} allow for hyperplane splits involving multiple features (hence reducing interpretability) to address instabilities caused when several features compete to be selected at any split.
% \cite{aluja2003stability} propose a series of data diagnostics to prevent internal instability in the tree-growing process before a particular split is made
% \citep{shannon1999combining} probability distribution for an equivalence class of classification trees (that is, those thatignore the value of the cutpoints but retain tree structure). This distribution is parameterized by a centraltree structure representing the true model, and a precision or concentration coe¶cient representing thevariability around the central tree. We use this distribution to model an observed set of classification treesexhibiting variability in tree structure.
%\cite{bertsimas2022stable} develop stable classification models by optimizing the choice of training/validation split, but only consider stability with respect to predicted outcomes and feature importance scores; moreover, despite its success with linear models, their work does not materially improve the stability of decision trees.

\paragraph{Measuring stability.} The first step towards quantifying decision trees' stability comprises of measuring the distance between two trees. To do so, a first family of approaches \citep{turney1995bias,briand2009similarity}, referred to as ``semantic stability,'' evaluates the degree to which two decision trees make the same predictions, e.g., by classifying a randomly selected set of instances and calculating the proportion assigned to the same class by both trees. The second family \citep{zimmermann2008ensemble,briand2009similarity}, ``structural stability,'' examines the similarity between structural properties of two trees, e.g., by looking at the variance in the size and depth of the trees during cross-validation. Hybrid approaches \citep{dwyer2007decision,wang2018region} rely on region stability or compatibility and estimate the probability that the trees classify a randomly selected example in ``equivalent'' decision regions. None of the above approaches directly compares the two trees' structures; this would be possible using syntactic distance measures \citep{levenshtein1966binary,zhang1989simple,priel2022vectorial}, such as the edit distance, which, however, heavily depend on the representation and consider logically equivalent trees as different \citep{turney1995bias,miglio2004comparison}. We encode decision trees in a way that enables us to overcome this limitation and compute \textbf{trees' structural distance} in the most direct way --- by finding the optimal matching of the trees' paths.

% Semantic stability on agreement between trees built on samples from the same distribution \citep{turney1995bias}
%  Paul et al. proposed class prediction stability as an indicator of semantic stability for classification algorithms ( Paul, Verleysen, & Dupont, 2012 ). Class prediction stability measures the extent each individual test example is assigned to the same class label across various re-samplings. Stability = 1 when every test example is al- ways assigned the same class label, although this may not neces- sarily be correct.
% Structural stability measures focus on various decision tree structural aspects, including tree size (i.e., total number of the nodes), depth (i.e., length of the longest branch), leaf intension (i.e., logical formulas for the paths from root node to leaves), to measure similarity for variable selections and cut-point stability with decision tree internal nodes.
% Briand et. al. [30] have proposed a similarity measure based on the placement of the attributes at the internal node of the decision tree to assess the stability of decision trees.
% Dwyer [7] has defined region stability metric as a novel measure for structural stability of decision trees. 
% \cite{dwyer2007decision} introduce measures for decision tree semantic and structural stability, both of which, however, rely on the trees making similar predictions for randomly selected samples. studied the problem of instability of decision trees in the passive and the active learning settings

\subsection{Interpretability and its Interface with Stability} \label{ss:interpretable}

\paragraph{Importance of interpretability in ML.} The practical implications of interpretability on the adoption of ML models have been highlighted by numerous recent studies. Practitioners are more likely to use algorithms if they understand and are able to modify them \citep{dietvorst2018overcoming}; this can be particularly beneficial when algorithmic decisions lack domain knowledge and suffer by model misspecification \citep{chen2022algorithmic}, or when human decision makers have access to private information that is unused by the algorithm \citep{ibrahim2021eliciting, balakrishnan2022improving}. Vice versa, interpretable ML can assist and affect human decisions \citep{gillis2021fairness}, or even help improve workers' performance by inferring tips and strategies from the model \citep{bastani2021learning}. Especially in health care applications, the incorporation of ML into clinical medicine raises numerous ethical challenges \citep{char2018implementing} and it is crucial for practitioners to be able to understand the reasoning behind ML models' decisions. This raises questions regarding quantitatively assessing the effect of interpretability on model performance (see \cite{bertsimas2021pricing} for such a study in the context of algorithmic insurance).
%\cite{balakrishnan2022improving} humans may sometimes have “private” informationwhich an algorithm does not have access to that can improve performance. How can we help humans effectively useand adjust recommendations made by algorithms in such situations

% \paragraph{Adoption of interpretable ML in high-stakes applications.} To better depict the state of the adoption of interpretable ML in high-stakes applications, we discuss examples of implementation of ML during the COVID-19 pandemic. \cite{bastani2021efficient} use interpretable ML models as part of Greece's border COVID-19 testing system to extract interpretable traveller types based on their demographic features and provide transparent reasoning regarding whom the system tests. \cite{bertsimas2021predictions} use black-box models with high predictive power (hence sacrificing interpretability) to predict the risk of COVID-19 infection and mortality, and resort to Shapley additive explanations \citep{lundberg2017unified} to identify risk drivers and provide transparency on the model predictions. \cite{grand2021interpretable} compute interpretable triage guidelines for ventilator allocations for COVID-19 patients based on policies for Markov Decision Processes represented as sequences of decision trees.

\paragraph{Measuring interpretability.} The preceding discussion emphasizes the need to further understand interpretable ML models. \cite{bertsimas2019price} introduce a mathematical framework to rigorously measure a model's interpretability, which, until recently, remained only loosely defined: using their framework, models that are in principle interpretable (including decision trees) can in practice have varying degrees of interpretability. Additionally, Example~\ref{ex:healthcare} raises the question of how interpretable can an unstable model be: a model which changes vastly when the data changes slightly cannot provide trustworthy knowledge concerning the underlying problem. In this paper, we explore the \textbf{stability-interpretability relationship} for decision tree models, which are inherently interpretable, and uncover the interpretability characteristics (e.g., number of nodes) of the most stable trees.

\subsection{Contributions, Outline, and Methodology}

\paragraph{Contributions.} We now summarize the contributions of our work. 
\begin{itemize}
    \item We introduce a novel distance metric for decision tree models. The proposed metric enables us to quantify how structurally different two decision trees are and determine their relative stability. %More concretely, we link the notion of stability with the proposed distance measure on the basis that, for a given problem, the most stable trees are the ones that are close to a ``centroid'' trees, i.e., a tree that is likely to be obtained over different samples of the data and different hyperparameter combinations. 
    % Measuring the distance between decision trees has been largely missing from the ML literature and, therefore, the proposed distance measure may be of independent interest.
    \item We propose a new methodology to train more stable decision tree models. The proposed methodology is particularly relevant in settings where more data is expected to become available over time, in which case the underlying ML model may need to be retrained. %Moreover, the proposed methodology allows us to investigate the existence of trade-offs that are inherent to decision tree models --- including between stability, predictive power, and interpretability.
    \item We demonstrate the value of the proposed methodology through a variety of real-world case studies in health care, where stability and interpretability are both essential, ranging from predicting the risk of deep vein thrombosis to examining the effect of radiotherapy on reducing the risk of local recurrence to patients with sarcoma tumor.
    % Propose methodology to train slowly varying decision tree models, in settings where the decision tree is expected to change slowly over time or space
\end{itemize}

\paragraph{Outline.} We organize the rest of the paper as follows.  In Section \ref{s:distance}, we formalize the decision tree problem and introduce the proposed distance metric for decision trees, which plays a central role in our notion of stability. Then, Section \ref{s:stable} presents and evaluates our methodology to train stable decision trees. Finally, in Section \ref{s:case-studies}, we provide a detailed description and qualitative analysis of the six real-world case studies from the health care space which we use to empirically study the proposed methodology.

\paragraph{Experimental methodology and software.} All numerical experiments in this paper rely on six real-world case studies, all of which come from applications in health care. In summary, the case studies we consider are: \textit{Thrombosis}, where we predict the risk of deep vein thrombosis after endovenous thermal ablation; \textit{Sarcoma tumor,} where we examine the effect of radiotherapy on reducing local recurrence within five years to patients with sarcoma tumor; \textit{REBOA}, where we study whether, using ML, we can decrease the misuse of resuscitative endovascular balloon occlusion of the aorta in hemodynamically unstable blunt trauma patients; \textit{TAVR}, where we investigate whether using the appropriate valve type in a transcatheter aortic valve replacement procedure can reduce the need for pacemaker; \textit{Splenic injury}, where we explore how different treatments affect mortality of victims of blunt splenic injury; and \textit{Breast cancer}, where we predict whether a breast cancer is benign or malignant using features computed from a digitized image of a fine needle aspirate of a breast mass. We provide specific details about and a thorough qualitative analysis of the case studies in Section~\ref{s:case-studies}.

In all case studies, we split the data into training (67\%) and testing (33\%) sets, which we use to train and evaluate (out-of-sample), respectively, our models. We repeat the data spliting process multiple times (10, unless otherwise specified), and report the mean and standard deviation of the results. We implement all algorithms in \verb|Python| programming language (version 3.7). We use the \verb|Scikit-learn| implementation \citep{scikit-learn} of the CART \citep{breiman1984classification} and Random Forest \citep{breiman2001random} algorithms. We solve the optimization models using the \verb|Gurobi| commercial solver (version 9.5). All experiments were performed on a standard Intel(R) Xeon(R) CPU E5-2690 @ 2.90GHz running CentOS release 7. 

\section{Measuring the Distance between Decision Trees} \label{s:distance}

Central to our approach is a distance metric between decision trees, which directly compares the trees' structures and, to our knowledge, has been missing from the ML literature; the subject of this section is the definition and study of such a metric. 
%In Section \ref{ss:decision-tree-problem-definition}, we introduce the decision tree problem and the notation that we use throughout the paper. Section \ref{ss:decision-tree-representation} presents a compact representation of a decision tree as a collection of paths; such representation enables us to rigorously measure the distance between paths and trees. In Sections \ref{ss:distance-between-paths} and \ref{ss:distance-between-trees}, we introduce the proposed measure of the distance between paths and trees, respectively. Finally, %in Section \ref{ss:metric} we prove that the proposed distance measure is a metric and in Section \ref{ss:numerical-experiments-distance}, we numerically study its properties using the case studies described in Section \ref{s:case-studies}.

\subsection{Decision Tree Problem Definition} \label{ss:decision-tree-problem-definition}

We start by introducing the decision tree problem and the notation we use throughout the paper.

\paragraph{Notation.}
We are given data $\bm X \in \mathbb{R}^{N\times P}$ and responses $\bm y \in \mathbb{R}^N$; noting that our work naturally generalizes to regression problems with $\bm y \in \mathbb{R}^N$, we focus, throughout this paper, on classification problems with classes $[K] := \{1,\dots,K\}$ so that $\bm y \in [K]^N$. We denote by $\mathcal{N} \subseteq [P]$ the set of numerical features and $\mathcal{C} = [P] \setminus \mathcal{N}$ the set of categorical features. Each numerical feature is characterized by its upper bound $u_j, j\in \mathcal{N}$, and its lower bound $l_j,  j\in \mathcal{N}$; each categorical feature is characterized by its number of categories $c_j,  j\in \mathcal{C}$. Therefore, a problem's feature space is defined by a collection of three vectors $(\bm u, \bm l, \bm c) \in \mathbb{R}^{|\mathcal{N}|} \times \mathbb{R}^{|\mathcal{N}|} \times \mathbb{N}^{|\mathcal{C}|}$ corresponding, respectively, to the numerical features' upper and lower bounds, and the categorical features' number of categories.

\paragraph{Problem statement.}
In their simplest form, decision trees (with parallel splits) partition the feature space into a set of rectangles, and then assign a prediction to each; for classification problems, the assigned prediction is a class label $k \in [K]$. Each data point $\bm x \in \mathbb{R}^P$ is classified according to the class label that corresponds to the rectangle where it lies. The decision tree learning problem can be described as searching for a way to partition the feature space, such that an error metric, e.g., the number of misclassified points in the training data, is minimized --- possibly subject to additional terms and constraints that, respectively, penalize or prevent more granular partitionings or, equivalently, more complex trees.

\subsection{Decision Tree Representation} \label{ss:decision-tree-representation}

We now present a compact representation of a decision tree as a collection of paths, which facilitates the measurement of the distance between trees.

\paragraph{Definition of a split and feature space partitioning.}
A trained decision tree performs a sequence of ``splits'' on a subset of features. To split on a numerical feature $j \in \mathcal{N}$, we test whether its value is below a threshold $t\in [l_j,u_j]$. To split on a categorical feature $j \in \mathcal{C}$, we test its membership in a subset of categories $\mathcal{C}' \subseteq [c_j]$.

A sequence of splits partitions the feature space in the following way. Starting at the root node, the tree performs a split on a feature and therefore partitions the feature space into two disjoint rectangles. In the rectangle where the condition is satisfied, $t$ defines an upper bound for $j \in \mathcal{N}$ or $\mathcal{C}'$ determines the set of qualifying categories for $j \in \mathcal{C}$. In the rectangle where the condition is not satisfied, $t$ defines a lower bound for $j \in \mathcal{N}$ or $(\mathcal{C}')^C$ determines the qualifying categories for $j \in \mathcal{C}$. Then, the tree (possibly) further partitions each of the two resulting rectangles, each by splitting on a (possibly different) feature. The same logic is applied to every node in the tree, until a leaf node is reached.

\paragraph{Definition of a tree path.}
After performing a full sequence of splits, we obtain one of the final rectangles. Each such sequence, together with the class label that is assigned to the resulting rectangle define a \emph{tree path}. Path $p$ is characterized by the upper and lower bounds that are imposed on numerical features, the qualifying categories for categorical features, and the assigned class label. We represent the qualifying categories for feature $j$ in path $p$ as binary vector $\bm c_j^p \in \{0,1\}^{c_j}$ with ones in positions that correspond to categories that qualify across path $p$. We then append zeros to all such vectors (so they are of the same length) and stack them to form matrix $\bm C^p \in \{0,1\}^{|\mathcal{C}| \times \max_j c_j}$. Put together, we represent a tree path as $(\bm u^p, \bm l^p, \bm C^p, k^p) \in \mathbb{R}^{|\mathcal{N}|} \times \mathbb{R}^{|\mathcal{N}|} \times \{0,1\}^{|\mathcal{C}| \times \max_j c_j} \times [K]$.

\paragraph{Representation of a tree.}
A tree $\mathbb{T}$ is then (non-uniquely) represented as a collection of $T$ paths $\mathcal{P}(\mathbb{T}) = \{p_1, \dots, p_{T}\}$, where $|\mathcal{P}(\mathbb{T})|=T$. The non-uniqueness of this representation of decision trees owes to the fact that the order in which splits are performed does not matter. Thus, multiple trees can result in the same collection of paths $\mathcal{P}$. We believe this is a desirable property, since two trees that result in the same collection of paths decide on which class label to assign to any data point by testing the exact same set of conditions (albeit in different order).

\subsection{Distance between Paths} \label{ss:distance-between-paths}

In this section, we define two quantities that serve as building blocks in measuring the distance between two trees: the distance between two paths and the notion of a path's weight.

\paragraph{Paths' distance.}
To measure the distance between two paths, we compare the feature ranges and the class label that each path results in. Intuitively, two paths are close if they result in overlapping rectangles. This leads to the following definition for the distance between paths $p$ and $q$: 
\begin{equation} \label{eqn:dist-paths}
    d(p,q) = \sum_{j \in \mathcal{N}} \frac{\left|u^p_j - u^q_j\right| + \left|l^p_j - l^q_j\right|}{2(u_j - l_j)} + \sum_{j \in \mathcal{C}} \frac{\left\|\bm c^p_j-\bm c^q_j\right\|_1}{c_j} + \lambda \cdot \mathbbm{1}_{(k^p \neq k^q)},
\end{equation}
where $\mathbbm{1}_{(\cdot)}$ denotes the indicator function. We weigh the last term in $d(p,q)$ by $\lambda$ to adjust the relative importance in comparing the feature ranges and the class labels between the two paths. For example, two paths of depth $D$ can result in different feature ranges for at most $2D$ features. By setting $\lambda=2D$, we assign equal weight to the amount of overlap in feature ranges and the resulting class labels between the two paths.
We remark that instead of comparing the paths' class labels, we can compare the leaf class distributions by simply replacing the indicator $\mathbbm{1}_{(k^p \neq k^q)}$ in $d(p,q)$ with, e.g., the leaf node's Gini impurity. By doing so, the resulting paths' distance would incorporate statistical considerations too; as we are interested in structural stability, we do not address this here.

\paragraph{Path weight.}
In addition, to quantify a path's importance, we introduce the notion of ``path weight,'' which captures the portion of the feature ranges that the path covers (among features that are used in the path's split nodes) and is defined as follows:
\begin{equation} \label{eqn:weight-path}
    w(p) = \sum_{j \in \mathcal{N}} \frac{u^p_j - l^p_j}{u_j - l_j} \cdot \mathbbm{1}_{\left(u^p_j \neq u_j \text{ or } l^p_j \neq l_j\right)} + \sum_{j \in \mathcal{C}} \frac{c^p_j}{c_j}\cdot \mathbbm{1}_{\left(c^p_j \neq c_j\right)}.
\end{equation} 
Intuitively, a path that is assigned a heavy weight will lead to rectangles which include large portions of the ranges of the numerical features or many categories for the categorical features. The path weight will be used to measure the distance between trees with different numbers of paths. 

Notice that, in measuring both the paths' distance and the path weight, each feature is divided by its range or number of categories, so both quantities are expressed in the same scale.

\subsection{Distance between Trees and Computation} \label{ss:distance-between-trees}

Using the decision tree representation of Section \ref{ss:decision-tree-representation} and the path-related quantities of Section \ref{ss:distance-between-paths}, we are ready to introduce the proposed distance metric for decision trees.

\paragraph{Trees' distance.} To measure the distance between two trees, we look at how different their paths are in terms of the features each path splits on, the split thresholds or categories (for numerical or categorical features respectively), and the resulting class label. Such an approach captures structural differences between the two trees, instead of, e.g., comparing the distributions of outcomes. Intuitively, trees that are close will consist of similar paths and therefore lead to similar partitionings of the feature space. The proposed distance metric compares the two trees' paths and searches for a way to optimally match them.

Formally, we are interested in measuring the distance between trees $\mathbbm{T}_1$ and $\mathbbm{T}_2$. We assume, without loss of generality, that $T_1 > T_2$, that is, $\mathbbm{T}_1$ consists of a larger number of paths. We introduce decision variables $x_{pq} = \mathbbm{1}(\text{path $p$ in $\mathbbm{T}_1$ is matched with path $q$ in $\mathbbm{T}_2$})$ and $x_{p} = \mathbbm{1}(\text{path $p$ in $\mathbbm{T}_1$ is left unmatched})$. We formulate the following integer linear optimization problem:
\begin{equation} \label{eq:distance}
    \begin{split}
        d(\mathbbm{T}_1, \mathbbm{T}_2)
        \ =\ \min_{\bm x} & \quad \sum_{p \in \mathcal{P}(\mathbb{T}_1)} \sum_{q \in \mathcal{P}(\mathbb{T}_2)} d(p,q) x_{pq} 
        + \sum_{p \in \mathcal{P}(\mathbb{T}_1)} w(p) x_{p}\\
        \text{s.t.} & \quad \sum_{q \in \mathcal{P}(\mathbb{T}_2)} x_{pq} + x_{p} = 1, \quad \forall p \in \mathcal{P}(\mathbb{T}_1) \\
        & \quad \sum_{p \in \mathcal{P}(\mathbb{T}_1)} x_{pq} = 1, \quad \forall q \in \mathcal{P}(\mathbb{T}_2) \\
        & \quad x_{pq} \in \{0,1\}, \quad x_{p} \in \{0,1\}, \qquad \forall p \in \mathcal{P}(\mathbb{T}_1), \quad \forall q \in \mathcal{P}(\mathbb{T}_2)
    \end{split}
\end{equation}
Upon solving Problem~\eqref{eq:distance}, each path $p \in \mathcal{P}(\mathbb{T}_1)$ will be either matched with a path $q \in \mathcal{P}(\mathbb{T}_2)$, in which case the tree distance $d(\mathbbm{T}_1, \mathbbm{T}_2)$ will increase by the distance $d(p,q)$ between the two paths, or will remain unmatched, in which case the tree distance will increase by the path's weight $w(p)$. %We note that, if $T_1 < T_2$, we replace the variables $x_p$, defined for $p \in \mathcal{P}(\mathbb{T}_1)$, with variables $x_q$, defined for $q \in \mathcal{P}(\mathbb{T}_2)$, and incorporate them in the formulation analogously; 
We note that, if $T_1 = T_2$, that is, the two trees consist of the same number of paths, we do not include $x_p$ in the formulation.

The following proposition, which we prove in Appendix~\ref{appx:proofs-metric}, suggests that the proposed distance measure satisfies the requirements of a metric: the distance from a tree to itself is zero, the distance between two distinct trees is positive, the distance is symmetric, and it satisfies the triangle inequality.
\begin{proposition} \label{lem:metric}
Let $\mathcal{T}$ denote the set of all trees of maximum depth $D$ and $\mathbbm{T}_1, \mathbbm{T}_2 \in \mathcal{T}$. 
Then, $d(\mathbbm{T}_1, \mathbbm{T}_2)$ is a metric mapping $\mathcal{T}\times \mathcal{T} \mapsto \mathbb{R}.$
\end{proposition}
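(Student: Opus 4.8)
\emph{Plan.} I would verify the four defining properties of a metric in turn---nonnegativity together with identity of indiscernibles, symmetry, and the triangle inequality---treating two trees as equal precisely when they induce the same path collection $\mathcal P(\mathbb T)$, the natural notion given the non-uniqueness noted in Section~\ref{ss:decision-tree-representation}. The useful preliminary observation is that the path distance $d(p,q)$ of \eqref{eqn:dist-paths} is itself a metric on the space of paths: each numerical term is a scaled $\ell_1$ distance in $(u^p_j,l^p_j)$, each categorical term is a scaled $\ell_1$ (Hamming) distance in $\bm c^p_j$, and $\lambda\,\mathbbm 1_{(k^p\neq k^q)}$ is $\lambda>0$ times the discrete metric on labels; a nonnegative sum of metrics is a metric. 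I would also record that $w(p)\ge 0$, that $d(p,q)=0$ if and only if $p=q$, and that any path belonging to a tree with at least two leaves has $w(p)>0$, since its rectangle is then a proper subset of the feature space and so imposes a nontrivial bound (a positive indicator term) on at least one feature.

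\emph{Nonnegativity, identity, symmetry.} Nonnegativity of $d(\mathbb T_1,\mathbb T_2)$ is immediate because all objective coefficients $d(p,q)$ and $w(p)$ in \eqref{eq:distance} are nonnegative. For $d(\mathbb T,\mathbb T)=0$, the identity matching is feasible and attains objective value $0$. Conversely, if $d(\mathbb T_1,\mathbb T_2)=0$, then no path can be left unmatched at a positive weight, so the preliminary observation forces $T_1=T_2$ (otherwise at least one positive-weight path would be deleted), after which every matched pair must satisfy $d(p,q)=0$, i.e. $p=q$; hence $\mathcal P(\mathbb T_1)=\mathcal P(\mathbb T_2)$. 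Symmetry holds because the objective only sees $d(p,q)=d(q,p)$ and $w(\cdot)$, and the formulation singles out the larger tree by cardinality rather than by argument order, so interchanging $\mathbb T_1$ and $\mathbb T_2$ yields an identical optimization problem.

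\emph{Triangle inequality (the crux).} For trees $\mathbb T_1,\mathbb T_2,\mathbb T_3$ I would fix optimal matchings $M^\ast_{12}$ and $M^\ast_{23}$ attaining $d(\mathbb T_1,\mathbb T_2)$ and $d(\mathbb T_2,\mathbb T_3)$, and attempt to splice them through $\mathbb T_2$ into a feasible solution of the $(\mathbb T_1,\mathbb T_3)$ problem: a path $a\in\mathcal P(\mathbb T_1)$ matched to $b\in\mathcal P(\mathbb T_2)$ and then to $c\in\mathcal P(\mathbb T_3)$ would be matched to $c$, invoking $d(a,c)\le d(a,b)+d(b,c)$, while paths whose chain terminates in a deletion at the intermediate stage would have to be rerouted or deleted. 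Since any feasible $(\mathbb T_1,\mathbb T_3)$ solution upper bounds $d(\mathbb T_1,\mathbb T_3)$, it would suffice to show the spliced solution costs at most $d(\mathbb T_1,\mathbb T_2)+d(\mathbb T_2,\mathbb T_3)$.

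\emph{Main obstacle.} I expect the genuine difficulty to be exactly this splicing, and it is delicate because the deletion penalty $w(\cdot)$ is \emph{not} preserved under matching. The configuration in which $a$ is matched to $b$ but $b$ is deleted at the next stage would require the reverse-triangle bound $w(a)\le d(a,b)+w(b)$, i.e. $w$ being $1$-Lipschitz with respect to $d(\cdot,\cdot)$; however $w$ is only $2$-Lipschitz (a path covering a wide range can be close to a path covering a narrow range yet carry a much larger weight), so the naive per-pair accounting does not close and the spliced matching can overspend against the budget. Compounding this, the asymmetric constraints---forcing every path of the smaller tree to be matched---mean that a careful case analysis on the orderings of $T_1,T_2,T_3$ is needed even to guarantee feasibility of the spliced solution. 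I therefore anticipate that a correct argument must go beyond per-pair bounds and instead exploit a global, conservation-type property of the weights induced by the fact that the path rectangles partition the feature space (so the total deletable weight is controlled), or else restrict the admissible tree class; pinning down this step---rather than any single inequality---is where I expect the real work to lie.
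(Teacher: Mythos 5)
Your plan coincides with the paper's proof in every step you actually carry out. The paper first proves (its Lemma~\ref{lem:metric-path}) that the path distance \eqref{eqn:dist-paths} is a metric via exactly your decomposition into scaled $\ell_1$ terms plus $\lambda$ times the discrete metric on labels; it dispatches nonnegativity, identity, and symmetry of the tree distance essentially as you do; and it obtains the triangle inequality by splicing the optimal $(\mathbbm{T}_1,\mathbbm{T}_2)$ and $(\mathbbm{T}_2,\mathbbm{T}_3)$ matchings through $\mathbbm{T}_2$, applying the path-level triangle inequality to each chain, and noting that the composed matching is feasible (hence suboptimal) for the $(\mathbbm{T}_1,\mathbbm{T}_3)$ problem.

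The genuine gap is that you stop at the crux: the triangle inequality is never proven, only diagnosed as delicate once unmatched paths and the weights $w(\cdot)$ of \eqref{eqn:weight-path} enter, so as submitted the proposal is an incomplete proof. For the record, the paper closes this step by assuming ``without loss of generality'' that all three trees have the same number of paths --- in which case no $x_p$ variables appear and the splice closes purely via Lemma~\ref{lem:metric-path} --- and defers the general case to appending dummy paths $q$ with $d(p,q)=w(p)$ as in Corollary~\ref{cor:relaxation}. That reduction, however, implicitly requires the extended path distance (real paths plus dummies) to satisfy the triangle inequality, which is precisely the pair of inequalities $w(p)\le d(p,r)+w(r)$ and $d(p,r)\le w(p)+w(r)$ you identify as the obstruction. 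Your worry is, if anything, understated: because of the indicators in \eqref{eqn:weight-path}, $w$ is not Lipschitz in $d$ at all (take $u^p_j=u_j-\epsilon$, $l^p_j=l_j$ against $u^r_j=u_j$, $l^r_j=l_j$, so that $w(p)-w(r)$ is roughly $1$ while the contribution to $d(p,r)$ is $\epsilon/(2(u_j-l_j))$), and $d(p,r)\le w(p)+w(r)$ fails whenever two light-weight paths carry different labels. So the step you flag as ``where the real work lies'' is indeed missing from your proposal, but it is also the step the paper's ``WLOG'' conceals rather than resolves; a clean repair is to redefine the deletion cost as $w(p)=d(p,p_0)$ for a fixed reference path $p_0$, so that dummies become genuine points of the path metric space and the splicing argument goes through uniformly.
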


\paragraph{Computation.} To compute the distance between two trees, our proposed approach requires solving Problem~\eqref{eq:distance}, which is a variant of bipartite matching and therefore efficiently solvable. In particular, consider the linear relaxation of Problem~\eqref{eq:distance}, where the binary constraints $x_{pq} \in \{0,1\}$ and $ x_{p} \in \{0,1\}$ are replaced with linear constraints $0 \leq x_{pq} \leq 1$ and $0 \leq x_{p} \leq 1$. We have the following corollary, which, for completeness, we prove in Appendix~\ref{appx:proofs-relaxation}:
% This is intractable from a complexity theory point of view as binary integer linear optimization problems are NP-complete. From a practitioner's viewpoint, the applications we consider require large numbers of distance computations to the extent that, despite the significant progress in mixed-integer optimization solvers, repeatedly solving Problem~\eqref{eq:distance} is impractical.
\begin{corollary} \label{cor:relaxation}
Any extreme point of the linear relaxation of Problem~\eqref{eq:distance} is a binary vector.
\end{corollary}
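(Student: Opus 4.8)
The plan is to show that the constraint matrix $A$ of the linear relaxation is totally unimodular and then invoke the Hoffman--Kruskal theorem, which guarantees that a polytope of the form $\{\bm x \ge \bm 0 : A \bm x = \bm b\}$ with $A$ totally unimodular and $\bm b$ integral has only integral vertices. Before that, I would record a simplification: given the equality constraints of Problem~\eqref{eq:distance} together with nonnegativity, each variable automatically satisfies $x_{pq} \le 1$ and $x_p \le 1$ (from $\sum_q x_{pq} + x_p = 1$ and $\sum_p x_{pq} = 1$ with all terms nonnegative), so the upper bounds introduced in the relaxation are redundant and the feasible region is exactly $\{\bm x \ge \bm 0 : A \bm x = \bm 1\}$. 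Consequently it suffices to prove that every vertex is integral; combined with $\bm 0 \le \bm x \le \bm 1$, integrality forces each coordinate into $\{0,1\}$, i.e., the vertex is binary.

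Next I would write $A$ explicitly and argue total unimodularity. Each column corresponds to one decision variable. The column for $x_{pq}$ has exactly two nonzero entries, both equal to $+1$: one in the row of the ``$p$-constraint'' $\sum_q x_{pq} + x_p = 1$ and one in the row of the ``$q$-constraint'' $\sum_p x_{pq} = 1$. The column for each slack variable $x_p$ has a single nonzero entry, again $+1$, in its $p$-constraint row. I would then apply the standard sufficient condition for total unimodularity of matrices whose columns contain at most two nonzeros of absolute value one: partition the rows into $R_1$, the constraints indexed by paths $p \in \mathcal{P}(\mathbb{T}_1)$, and $R_2$, the constraints indexed by paths $q \in \mathcal{P}(\mathbb{T}_2)$. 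For every two-entry column, the two $+1$ entries have the same sign and lie in different parts ($R_1$ and $R_2$), while every one-entry slack column trivially satisfies the condition. Hence $A$ is totally unimodular; equivalently, up to the slack columns, $A$ is the node-edge incidence matrix of the bipartite graph whose two vertex classes are the paths of $\mathbb{T}_1$ and of $\mathbb{T}_2$, and such incidence matrices are classically totally unimodular.

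With total unimodularity established, the corollary is immediate: the right-hand side $\bm b = \bm 1$ is integral, so by Hoffman--Kruskal every vertex of $\{\bm x \ge \bm 0 : A \bm x = \bm 1\}$ is integral, and by the redundant bounds $\bm 0 \le \bm x \le \bm 1$ every such vertex is binary.

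The main obstacle is the verification of total unimodularity itself rather than the application of Hoffman--Kruskal, which I treat as a black-box citation. The delicate point is the correct handling of the slack columns $x_p$ within the bipartite sign/partition condition: they break the ``exactly two nonzeros per column'' pattern of a pure bipartite incidence matrix, but since each contributes a single $+1$ the condition still holds. As a fully self-contained alternative I could bypass total unimodularity entirely: assuming a vertex $\bm x^*$ has a fractional coordinate, I would restrict attention to the bipartite subgraph of fractional-valued edges, use the equality constraints to show every incident node touches at least two fractional edges (or a fractional slack), extract an alternating even cycle (even because the graph is bipartite), and perturb $\bm x^*$ by $\pm \epsilon$ alternately along this cycle to produce two distinct feasible points whose midpoint is $\bm x^*$, contradicting extremality. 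I would present the total-unimodularity argument as the primary proof for brevity and mention this perturbation argument only as the underlying combinatorial intuition.
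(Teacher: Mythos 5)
Your proof is correct, and it rests on the same engine as the paper's --- total unimodularity of a bipartite-matching constraint matrix --- but the execution differs in a way worth noting. The paper first reformulates Problem~\eqref{eq:distance} by appending $T_1 - T_2$ ``dummy paths'' to $\mathbb{T}_2$, with $d(p,q) = w(p)$ for every dummy $q$, so that the problem becomes a balanced assignment problem whose coefficient matrix is the incidence matrix of a complete bipartite graph; it then asserts TU of that matrix and transfers an integral optimum back to the original formulation. You instead work directly on the original constraint system, treating each $x_p$ as a unit column, and verify TU via the classical two-nonzeros-per-column row-partition criterion (plus the observation that appending unit columns preserves TU). This buys you two things: you avoid the bookkeeping of the dummy-path reformulation and the argument that its optimum coincides with the original one, and you prove the statement as literally written --- that \emph{every} extreme point is binary --- rather than only the existence of an integral optimum, which is what the paper's argument most directly delivers. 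Your observation that the upper bounds $x_{pq}\le 1$, $x_p \le 1$ are implied by the equality constraints and nonnegativity (so the feasible region is exactly $\{\bm x \ge \bm 0 : A\bm x = \bm 1\}$ and Hoffman--Kruskal applies in standard form) is a detail the paper glosses over and is handled correctly here. The alternating-cycle perturbation argument you sketch as a fallback is also sound and would make the proof self-contained, but the TU route is the right primary presentation.
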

Corollary~\ref{cor:relaxation} guarantees that the optimum to the linear relaxation of Problem~\ref{eq:distance} is the incidence vector of a perfect matching and hence encodes an optimal matching of paths. Owing to this result, we can compute the distance between trees in polynomial time (and very efficiently in practice) by simply solving a linear optimization problem.

\paragraph{An upper bound on the distance.} To get a relative (and more intuitive) sense of how close two trees are, we properly scale the distance so that it expresses a percentage of the maximal amount two trees of depth $D$ can differ. To do so, we derive a problem-independent upper bound on the proposed distance metric for given maximum allowable tree depth $D$:
\begin{proposition} \label{lem:upperbound}
Given trees $\mathbbm{T}_1$ and $\mathbbm{T}_2$ with $\text{depth}(\mathbbm{T}_1) \leq D$ and $\text{depth}(\mathbbm{T}_2) \leq D$, it holds that $d(\mathbbm{T}_1, \mathbbm{T}_2)~\leq~2^D~(2D+\lambda).$
\end{proposition}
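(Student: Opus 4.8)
The plan is to bound the objective of the matching problem~\eqref{eq:distance} \emph{uniformly over all feasible matchings}; since $d(\mathbb{T}_1,\mathbb{T}_2)$ is the minimum of that objective, any such uniform upper bound is automatically an upper bound on the distance itself. The task then reduces to two ingredients: (i) a worst-case bound on the per-pair path distance $d(p,q)$ and on the path weight $w(p)$ for paths of depth at most $D$, and (ii) a count of how many terms of each type can appear in a feasible matching. Because a tree of depth at most $D$ is a binary tree with at most $2^D$ leaves, each $\mathbb{T}_i$ has at most $2^D$ paths, so $T_1\le 2^D$; this is what supplies the leading factor $2^D$ in the claimed bound.

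First I would bound $d(p,q)$ and $w(p)$. The key observation is that a feature which is never split on along a path retains its default range (the full interval $[l_j,u_j]$, or all $c_j$ categories), so such features contribute zero to both $d(p,q)$ and $w(p)$: in~\eqref{eqn:weight-path} the indicators switch off, and in~\eqref{eqn:dist-paths} the corresponding differences vanish whenever both paths leave the feature at its default. A root-to-leaf path of depth at most $D$ passes through at most $D$ split nodes and hence modifies at most $D$ distinct features. For each modified feature one checks directly from the definitions that its contribution to $d(p,q)$ is at most $1$ (because $|u^p_j-u^q_j|\le u_j-l_j$ and likewise for the lower bounds, while $\|\bm{c}^p_j-\bm{c}^q_j\|_1\le c_j$) and that its contribution to $w(p)$ is at most $1$ (because $u^p_j-l^p_j\le u_j-l_j$ and $c^p_j\le c_j$). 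Summing the per-feature bound of $1$ over the at most $2D$ features modified in either $p$ or $q$, and adding the label penalty $\lambda\,\mathbbm{1}_{(k^p\neq k^q)}\le\lambda$, yields $d(p,q)\le 2D+\lambda$; summing over the at most $D$ features modified in $p$ alone yields $w(p)\le D$.

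Finally I would assemble these bounds. In any feasible solution of~\eqref{eq:distance}, the constraints force exactly $T_2$ paths of $\mathbb{T}_1$ to be matched (one per path of $\mathbb{T}_2$) and the remaining $T_1-T_2$ to be left unmatched, so the objective equals $\sum_{\mathrm{matched}} d(p,q)+\sum_{\mathrm{unmatched}} w(p)\le T_2(2D+\lambda)+(T_1-T_2)D$. Using $D\le 2D+\lambda$ to absorb the weight terms into the distance terms bounds this by $T_1(2D+\lambda)$, and $T_1\le 2^D$ then gives $d(\mathbb{T}_1,\mathbb{T}_2)\le 2^D(2D+\lambda)$.

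I expect the only real subtlety to be the feature-counting step: making precise that only split features can contribute, and that a depth-$D$ path touches at most $D$ of them, so that the per-feature bound of $1$ may be summed over at most $2D$ features rather than over all of $\mathcal{N}\cup\mathcal{C}$. Everything else is elementary arithmetic on the definitions~\eqref{eqn:dist-paths}--\eqref{eqn:weight-path}, combined with the observation that bounding the objective over \emph{every} feasible matching certainly bounds its minimum.
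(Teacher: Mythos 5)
Your proof is correct and takes essentially the same approach as the paper's: bound the number of paths by $2^D$ and the contribution of each by $2D+\lambda$. You are in fact more careful than the paper, which asserts $d(p,q)\le 2D+\lambda$ without the feature-counting argument and does not explicitly handle the unmatched-path terms via $w(p)\le D\le 2D+\lambda$; the paper instead spends its remaining effort arguing that the bound is attained, which the proposition as stated does not require.
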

Owing to Proposition~\eqref{lem:upperbound}, which we prove in Appendix~\ref{appx:proofs-upperbound}, upon computing the distance, we scale it by $\nicefrac{1}{2^D (2D+\lambda)}$ and hence get an expression of the distance as a percentage of the distance between the two trees that are as far apart as possible. 

% \subsection{Tree Distance is a Metric} \label{ss:metric}

% \begin{theo} \label{theo:metric}

% \end{theo}

% \begin{proof}{Proof of Theorem~\ref{theo:metric}.}
% TODO
% \end{proof}

\subsection{Tree Distance in Practice}

We now return to the setting described in Example \ref{ex:healthcare} and study the proposed distance metric through a simple practical example. We use the sarcoma tumor case study (see Section \ref{s:case-studies} for details). 

Using an initial dataset $\bm X_0 \in \mathbb{R}^{N/2\times P}$ (which we build by sampling $\frac{N}{2}$ data points), we train two decision trees. For the first one, shown in the top-left panel of Figure \ref{fig:simple-trees} and referred to as \texttt{CART CV}$_0$, we apply a standard 5-fold cross-validation procedure. For the second one, shown in the bottom-left panel of Figure \ref{fig:simple-trees} and referred to as \texttt{CART Pareto}$_0$, we apply our proposed methodology (described in Section \ref{ss:stable-methodology}). At a later time, when the full dataset $\bm X \in \mathbb{R}^{N\times P}$ becomes available, we retrain the two decision trees by (independently) repeating the aforementioned two methodologies --- using $\bm X$ instead of $\bm X_0$. We refer to the resulting trees as \texttt{CART CV} (top-right panel of Figure \ref{fig:simple-trees}) and \texttt{CART Pareto} (bottom-right panel of Figure \ref{fig:simple-trees}). We note that we tuned all trees using the same set of candidate hyperparameters and, for simplicity in presentation, restricted the maximum depth to 4.

\begin{figure}[!ht]
\begin{subfigure}{0.48\textwidth}
\includegraphics[width=\linewidth]{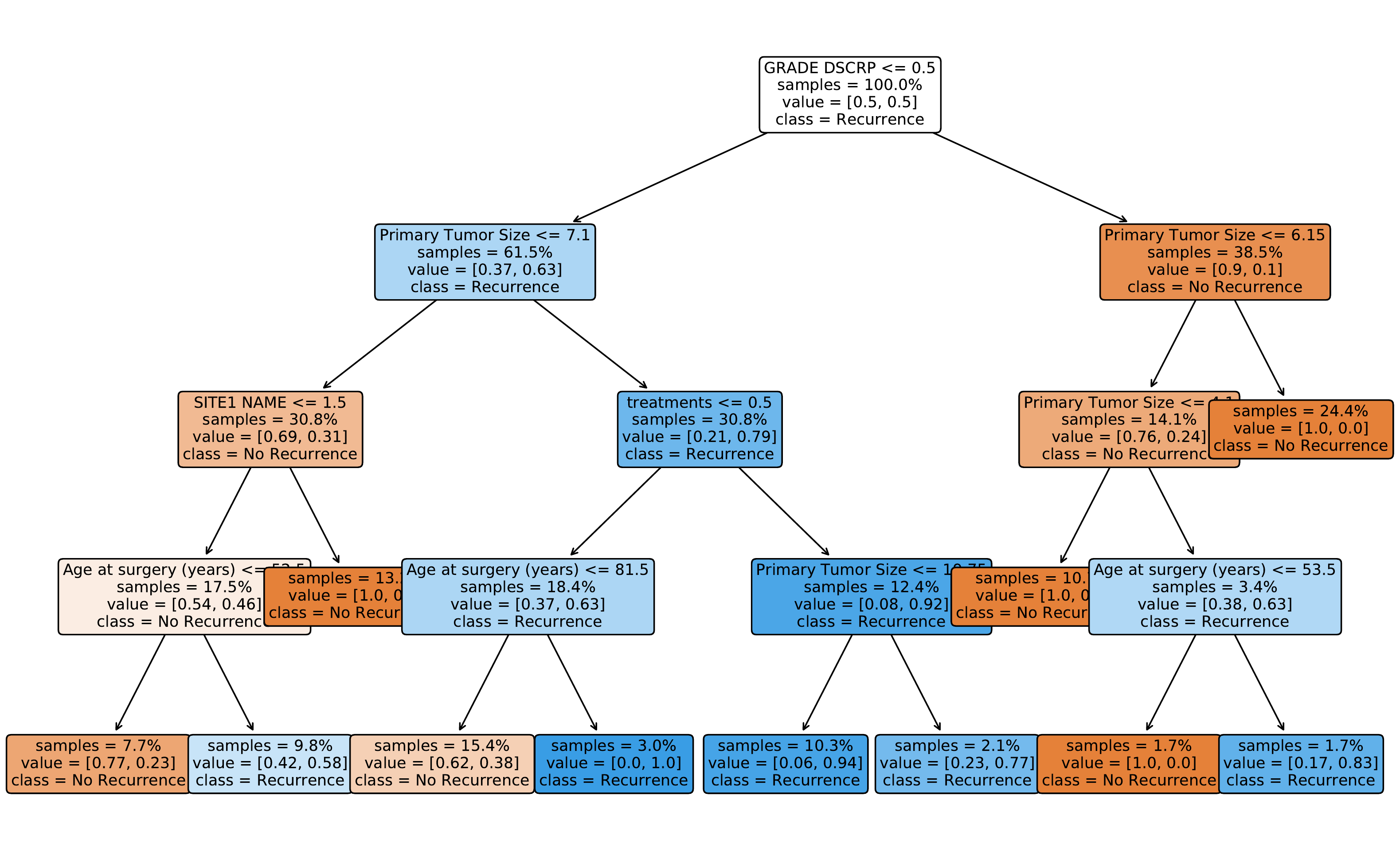} 
\end{subfigure}\hspace*{\fill}
\begin{subfigure}{0.48\textwidth}
\includegraphics[width=\linewidth]{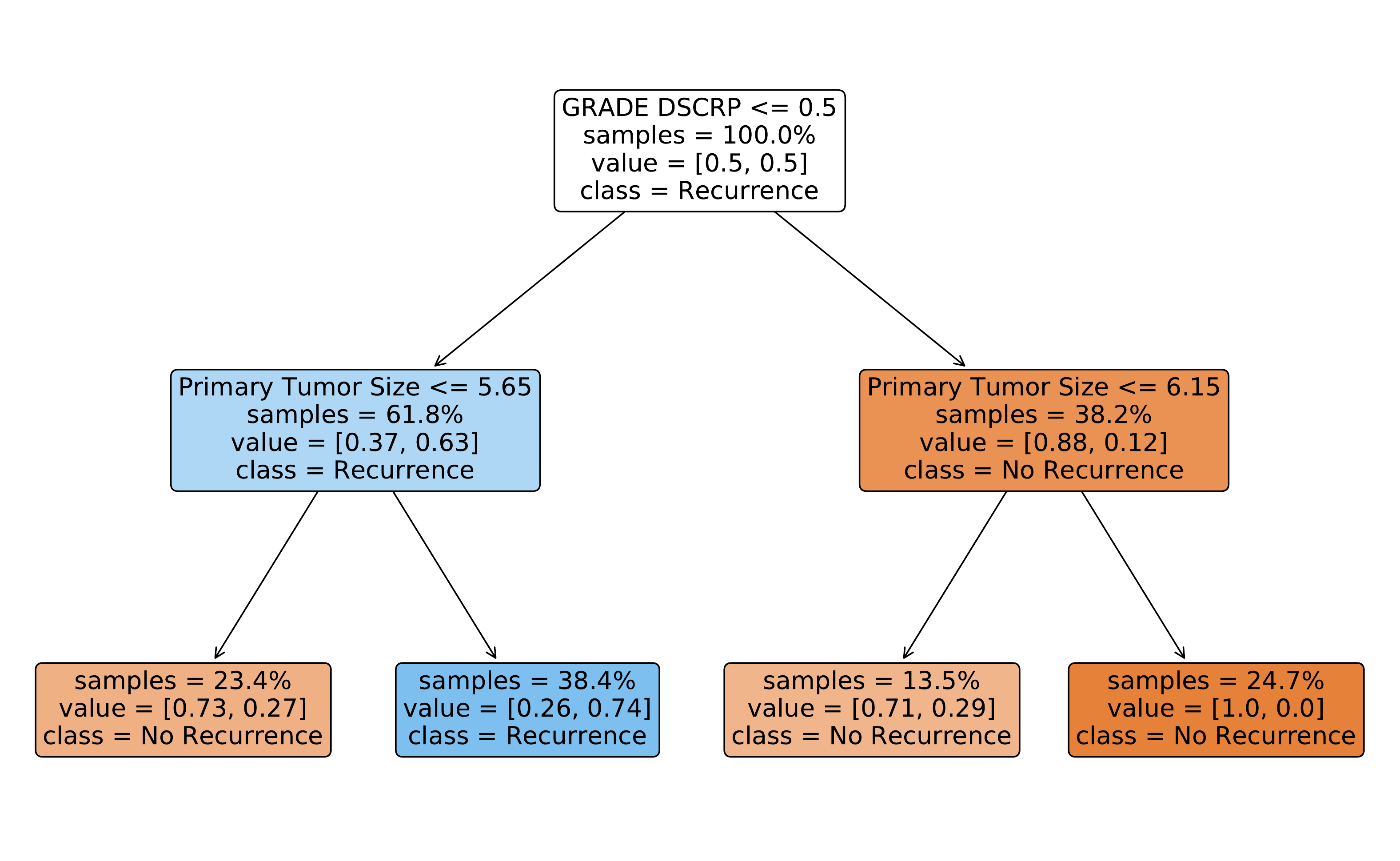} 
\end{subfigure}
\begin{subfigure}{0.48\textwidth}
\includegraphics[width=\linewidth]{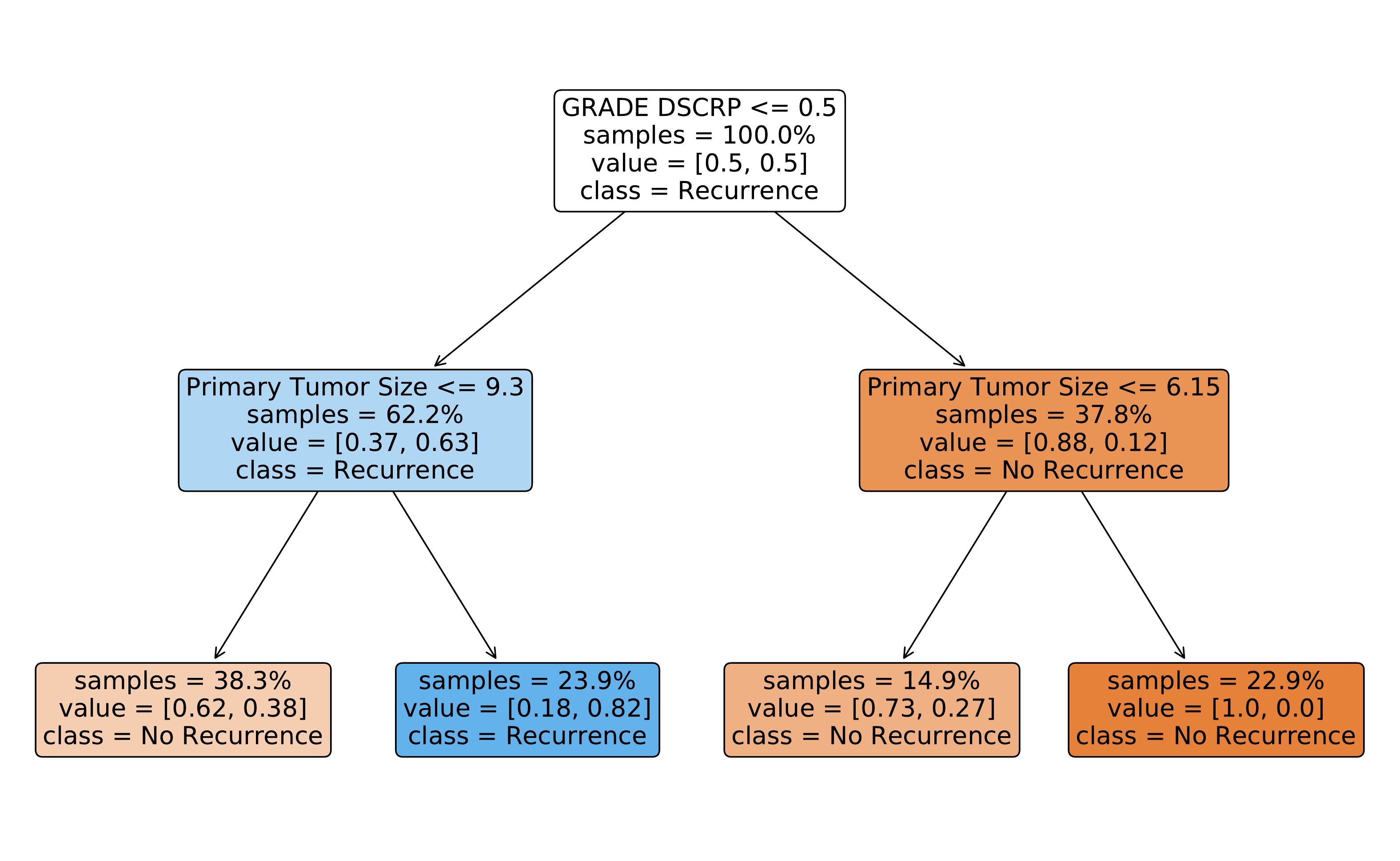} 
\end{subfigure}\hspace*{\fill}
\begin{subfigure}{0.48\textwidth}
\includegraphics[width=\linewidth]{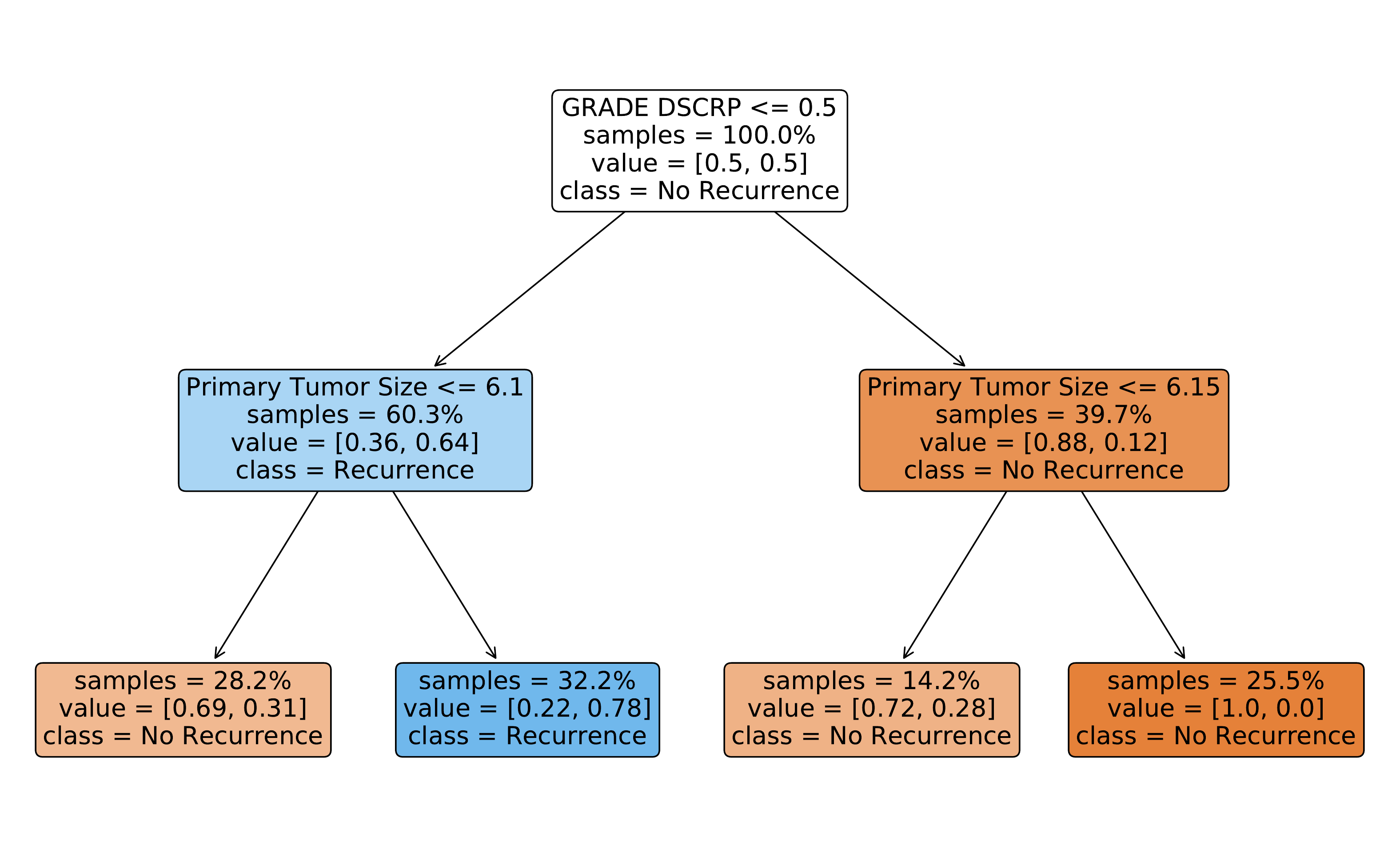} 
\end{subfigure}
\caption{Trees obtained using cross-validation (top) versus trees obtained using the proposed stable methodology of Section \ref{ss:stable-methodology} (bottom). For the left column trees, we used an initial training set of $\frac{N}{2}$ data points; for the right column trees, we used the full training set of $N$ data points. } \label{fig:simple-trees}
\end{figure}

Clearly, the trees obtained using cross-validation are structurally much more different from each other compared to the trees obtained using the proposed stable methodology. \texttt{CART CV}$_0$ and \texttt{CART CV} differ in both their depth (4 versus 2) and in the number of distinct features they split on (5 versus 2). In contrast, \texttt{CART Pareto}$_0$ and \texttt{CART Pareto} split on the exact same features and the only difference is a slight shift in the left child node's threshold by about $7\%$ of the corresponding feature range. The proposed distance metric captures this visual observation: $d(\texttt{CART CV}_0, \texttt{CART CV}) = 11.21 \%$ and $d(\texttt{CART Pareto}_0, \texttt{CART Pareto}) = 0.18 \%$, 
where the distances are expressed as a percentage of the distance between the two trees of depth $4$ that are as far apart as possible.  Closing our Example \ref{ex:healthcare} analogy, we claim that a physician who had been using \texttt{CART Pareto}$_0$ will likely trust \texttt{CART Pareto} upon retraining; it would be more difficult for them to transition from \texttt{CART CV}$_0$ to \texttt{CART CV}.

% \begin{figure}[!ht]
% \includegraphics[width=\linewidth]{} 
% \caption{Example tree distance computation between trees \texttt{CART Pareto}$_0$ (left) and \texttt{CART Pareto} (right).} \label{fig:tree-matching}
% \end{figure}

\subsection{Sensitivity Analysis of the Proposed Tree Distance} \label{ss:numerical-experiments-distance}

Before concluding the section, we discuss some key numerical properties of the proposed tree distance metric. The corresponding numerical study, which uses the case studies of Section \ref{s:case-studies}, is given in Appendix \ref{appx:numerical-experiments-distance}. We test the sensitivity of the proposed distance metric to two types of perturbations --- direct and indirect ones. Direct perturbations refer to immediate interventions and changes in the tree structure. Indirect perturbations refer to modifications in the training data. The main takeaways are the following:
\begin{itemize}
    \item The proposed distance metric has desirable properties, including having a linearly increasing relationship with the amount of direct perturbation in the tree and showing an increasing trend with the amount of indirect perturbation in the tree (i.e., the fraction of the training data that changes).
    \item The upper bound derived in Section \ref{ss:distance-between-trees} is conservative in that, trees where the thresholds are allowed to vary by 50\% in expectation and 100\% at maximum have an expected distance of 12.5\% and a standard deviation in the distance of 5\%, expressed as percentage of the maximum possible distance between any two trees of the given depth.
\end{itemize}

\section{Computing Stable Decision Trees} \label{s:stable}

In this section, we develop and empirically evaluate a methodology that improves the stability of decision tree models, using the distance metric we introduced in Section \ref{s:distance}. %We first describe the proposed methodology in Section \ref{ss:stable-methodology}. We empirically study the existence of a small set of ``Pareto optimal trees'' in Section \ref{ss:pareto}. Then, in Sections \ref{ss:stability-accuracy}, \ref{ss:stability-interpretability}, and \ref{ss:stability-feature-selection}, we investigate the effect of stability on predictive power, interpretability, and feature selection, respectively. Finally, Section \ref{ss:pareto-takeaways} compares two ``extreme'' Pareto optimal trees.

\subsection{Training Collections of Stable Trees} \label{ss:stable-methodology}

We now describe the proposed methodology to train collections of stable decision trees. The methodology is motivated by the health care setting of Example \ref{ex:healthcare}. We have an initial patient database (corresponding to, e.g., any of the case studies we describe in Section \ref{s:case-studies}) and train a model using the data available at the time. Later, when more patient data becomes available, we retrain the model to improve its accuracy (owing to the larger sample size) and to capture new patterns that may be present in the data. \emph{It is crucial that the new model does not deviate too much from the previous one, as this could affect physicians' trust and willingness to use the model.} We address such concerns by incorporating a notion of stability in the tree training and selection process.

More concretely, the proposed methodology consists of the following steps:
\begin{itemize}
    \item We randomly split the training data $\bm X$ into two batches, $\bm X_0 \in \mathbb{R}^{N_0\times P}$ and $\bm X_1 \in \mathbb{R}^{N_1\times P}$, such that $N_0+N_1=N$. 
    
    \item Using the first batch of training data, $\bm X_0$, we train a first collection of $B$ trees $\mathcal{T}_0 = \{ \mathbb{T}_1^0, \dots, \mathbb{T}_B^0\}$. We obtain $\mathcal{T}_0$ by bootstrapping $\bm X_0$, i.e., generating multiple new training datasets by sampling $\bm X_0$ uniformly and with replacement. For each dataset, we train decision trees with different hyperparameters: tree depth $D \in \{3,\dots,12\}$ and minimum number of samples per leaf $M \in \{3,5,10,30,50\}$. %We obtain such collection by repeating a 5-fold cross-validation procedure multiple times (i.e., for multiple splits of $\bm X_0$ into 5 batches) and examining, in each cross-validation, a large set of hyperparameters (e.g., tree depth $D \in \{3,\dots,12\}$ and minimum number of samples per leaf $M \in \{3,5,10,30,50\}$).
    
    \item Using the full training data, $\bm X$ (hence merging the two batches $\bm X_0$ and $\bm X_1$), we train a second collection of $B$ trees $\mathcal{T} = \{ \mathbb{T}_1, \dots, \mathbb{T}_B\}$, in the exact same way: we bootstrap $\bm X$ and examine, for each dataset, a large set of hyperparameters.% we repeat a 5-fold cross-validation procedure multiple times (i.e., for multiple splits of $\bm X$ into 5 batches) and examine, in each cross-validation, a large set of hyperparameters.

    \item We then compute, for every tree in the second batch $\mathbb{T}_b \in \mathcal{T}, b \in [B],$ its mean distance from all trees in the first batch $$d_b = \sum_{\beta=1}^B \frac{d(\mathbb{T}_\beta^0,\mathbb{T}_b)}{B}.$$
    In addition, using holdout (validation or testing) data $(\bm X_{\text{holdout}}, \bm y_{\text{holdout}})$, we estimate, for every tree in the second batch $\mathbb{T}_b \in \mathcal{T}, b \in [B],$ its out-of-sample predictive performance. For example, in binary classification problems, we use the area under the ROC curve $$\alpha_b = \text{AUC}(\mathbb{T}_b; \bm X_{\text{holdout}}, \bm y_{\text{holdout}}).$$

    \item We obtain the collection $\{(\mathbb{T}_b,d_b,\alpha_b)\}_{b=1}^B$, where each tree $\mathbb{T}_b \in \mathcal{T}$ is characterized by two, possibly competing, metrics: $d_b$, characterizing its stability, and $\alpha_b$, characterizing its predictive power. For any tree $\mathbb{T}_b$, we need to guarantee that there exists no another tree that performs at least as well on one metric and strictly better on the other; if such a tree existed, it would be strictly preferred for all practical considerations. To address this situation, we search for the Pareto frontier, that is, the set of Pareto optimal trees. A tree $\mathbb{T}_b \in \mathcal{T}$ is Pareto optimal if there exists no other tree $\mathbb{T}_{b'} \in \mathcal{T}, b\not=b',$ with $$(d_{b'} \leq d_b \text{ and } \alpha_{b'} > \alpha_b) \text{ or } (d_{b'} < d_b \text{ and } \alpha_{b'} \geq \alpha_b).$$ We denote the set of Pareto optimal trees (satisfying the aforementioned condition) by $\mathcal{T}^\star \subseteq \mathcal{T}.$ 

    \item Once $\mathcal{T}^\star$ is computed, we can select a tree from the Pareto frontier using an application-specific function:
    \begin{equation} \label{eq:select-tree}
        \mathbb{T}^\star = \underset{\mathbb{T}_b \in \mathcal{T}^\star}{\text{argmax}} \ f(d_b, \alpha_b).
    \end{equation}
    For example, for a given suboptimality tolerance $\epsilon$, we can use: $f(d_b, \alpha_b) =  (1 - d_b) \cdot \mathbbm{1}~{\left(\alpha_b \geq (1-\epsilon)\underset{b'}{\max}\ \alpha_{b'} \right)}$.
    Alternatively, if we are equally interested in stability and predictive power, we can choose the tree $\mathbb{T}^\star$ that maximizes $f(d_b, \alpha_b) =  \frac{- d_b + \alpha_b}{2}.$
\end{itemize}

The proposed methodology relies on the assumption that decision trees' stability is negatively correlated with the proposed tree distance metric. By selecting the tree in the second batch that is closest, on average, to the first batch trees, we are choosing a ``centroid'' tree, which is similar to many among a large number of trees obtained using a large number of datasets and parameters. Intuitively, such a tree is likely to be a stable one.

As a final remark, we note that it is possible to characterize the trained trees in terms of additional metrics. For example, we may be willing to assign an interpretability score $i_b$ (encoding, e.g., the number of nodes in the tree) to each tree. In that case, we would need to identify Pareto optimal trees in three dimensions and extract the set $\mathcal{T}^\star$ using the collection $\{(\mathbb{T}_b,d_b,\alpha_b,i_b)\}_{b=1}^B$. The output tree would then be selected using a function $f(d_b,\alpha_b,i_b)$ of all metrics of interest.

\subsection{Pareto Optimal Trees} \label{ss:pareto}

We now numerically test our hypothesis on the existence of a set of Pareto optimal trees using the case studies of Section \ref{s:case-studies}. For each case study, we train a collection of trees $\mathcal{T}$ using the methodology of Section \ref{ss:stable-methodology} and, for each tree $\mathbb{T}_b$, we calculate its out-of-sample AUC $\alpha_b$ and mean distance from the first batch $d_b$. We plot the results in Figure~\ref{fig:frontiers}. In red, we show the set $\mathcal{T}^\star$ of Pareto optimal trees; in blue, we show the set of Pareto dominated trees. The formation of a Pareto frontier is clearer in the Thrombosis, Sarcoma tumor, TAVR, and Breast cancer case studies; in the remaining case studies, the set of Pareto optimal trees is concentrated in the bottom right corner of the graph --- in which case the stability-predictive power trade-off is less profound, and the selection of the final tree should be easier. In Figure~\ref{fig:bar-frontier}, we give summary statistics about the size of the Pareto frontier for each case study and across 10 independent repetitions of the experiment (training-testing data splits). At maximum, the frontier never exceeds nine trees; on average, it consists of only five trees; in one case, there exists just one Pareto optimal tree.

\begin{figure}[!ht]
\begin{subfigure}{0.48\textwidth}
\includegraphics[width=\linewidth]{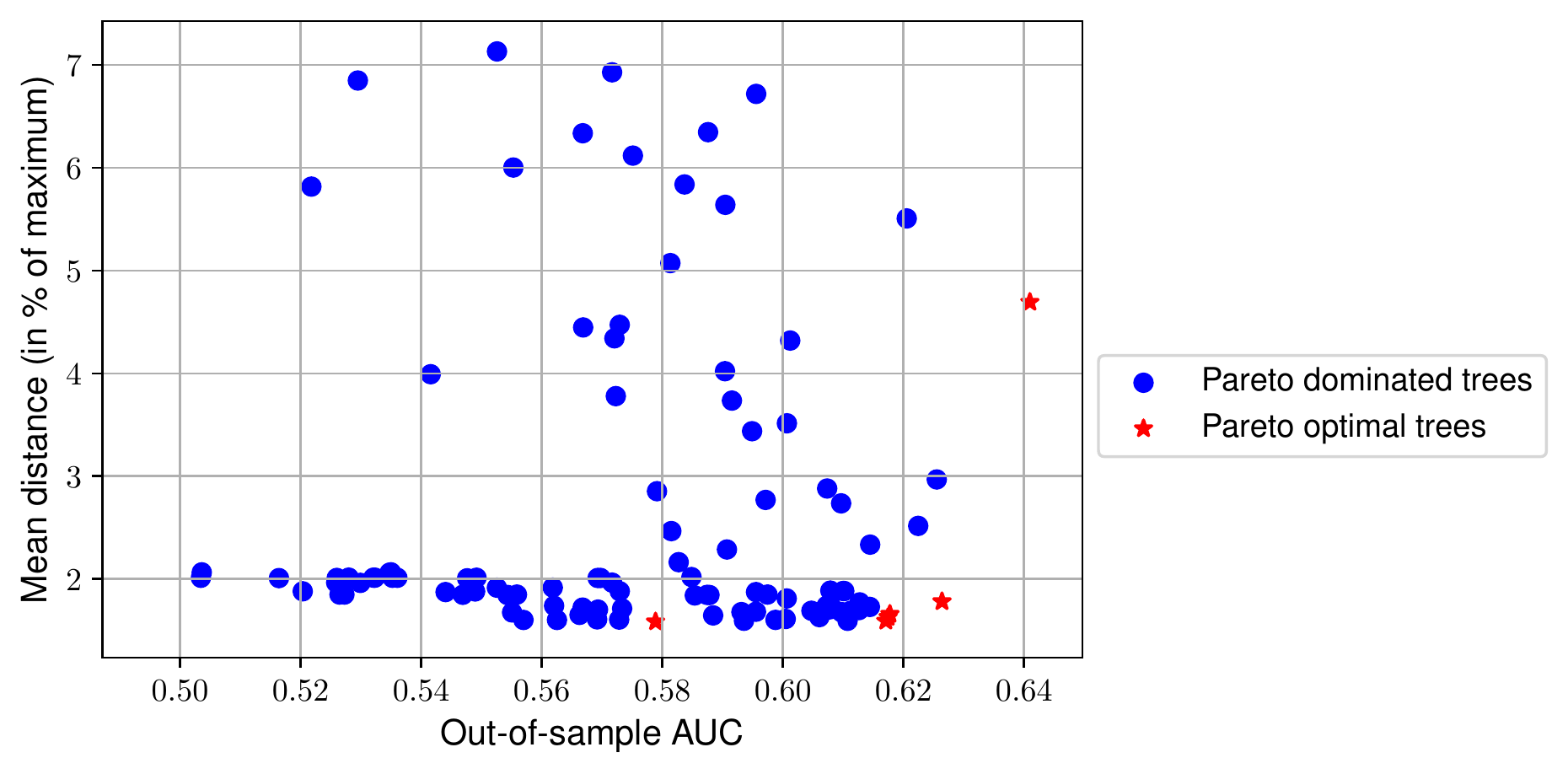}
\caption{Thrombosis} \label{fig:frontier-Thrombosis}
\end{subfigure}\hspace*{\fill}
\begin{subfigure}{0.48\textwidth}
\includegraphics[width=\linewidth]{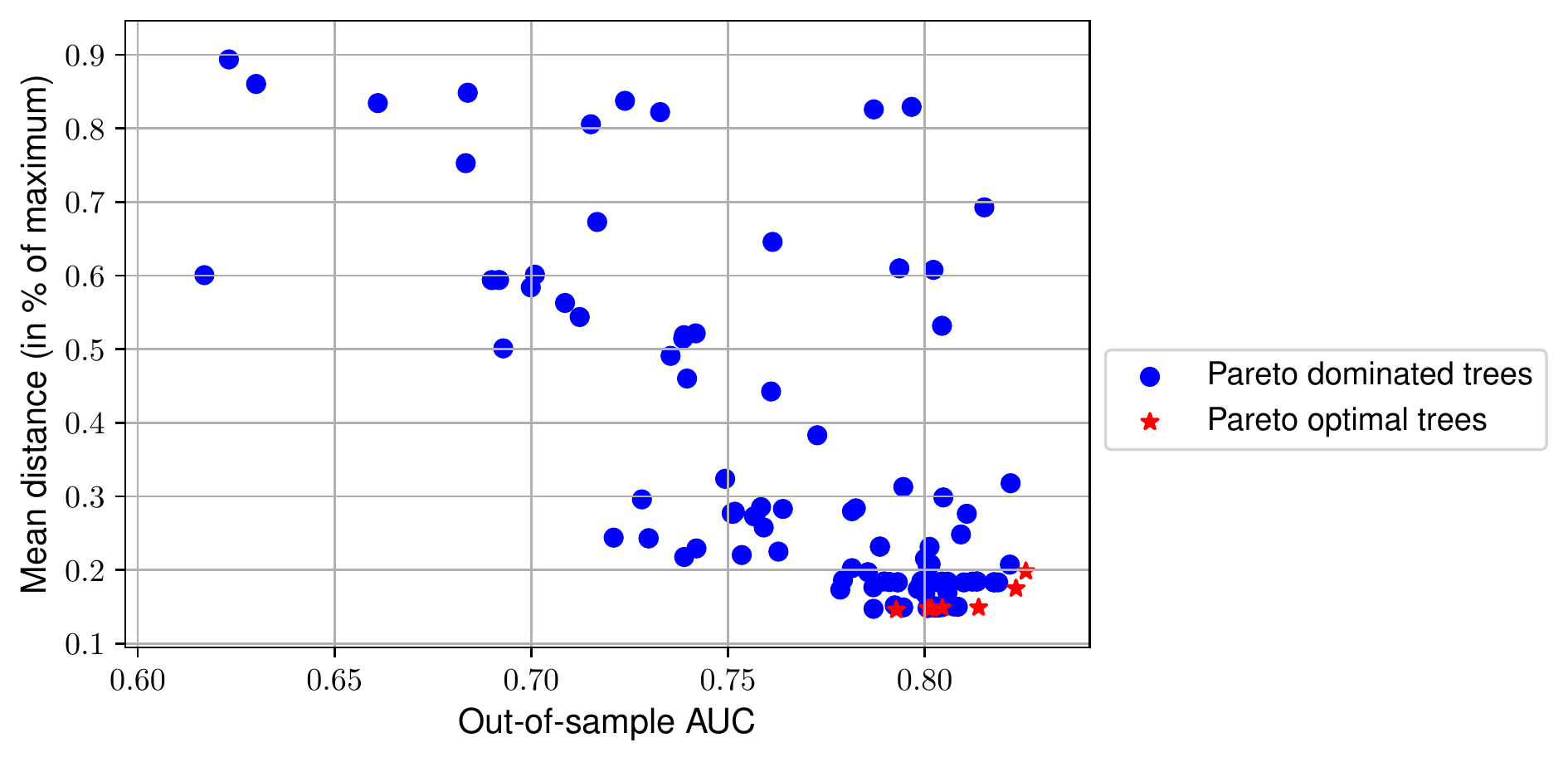}
\caption{Sarcoma Tumor} \label{fig:frontier-sarcoma-tumor}
\end{subfigure}

\medskip
\begin{subfigure}{0.48\textwidth}
\includegraphics[width=\linewidth]{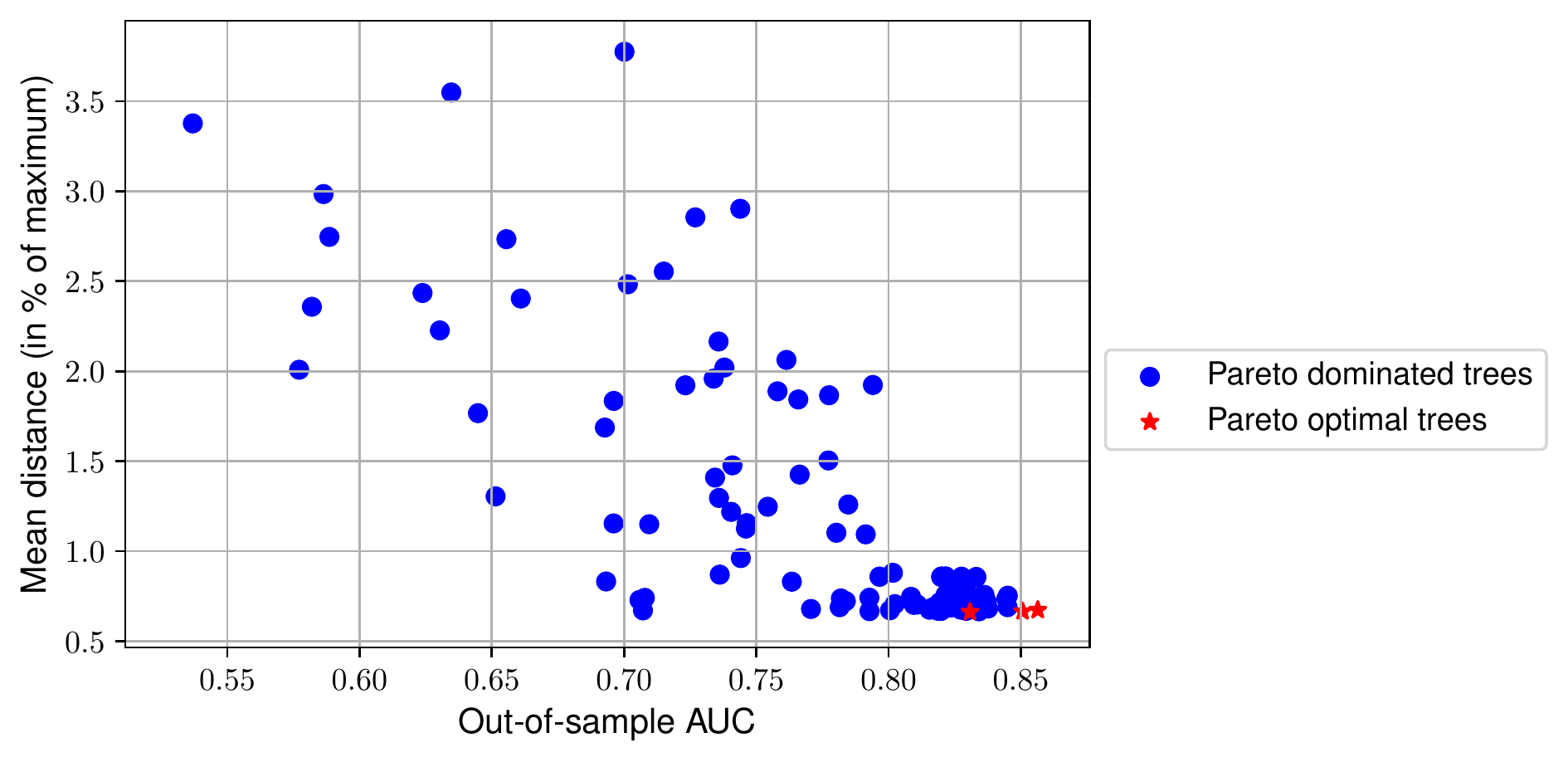}
\caption{REBOA} \label{fig:frontier-REBOA}
\end{subfigure}\hspace*{\fill}
\begin{subfigure}{0.48\textwidth}
\includegraphics[width=\linewidth]{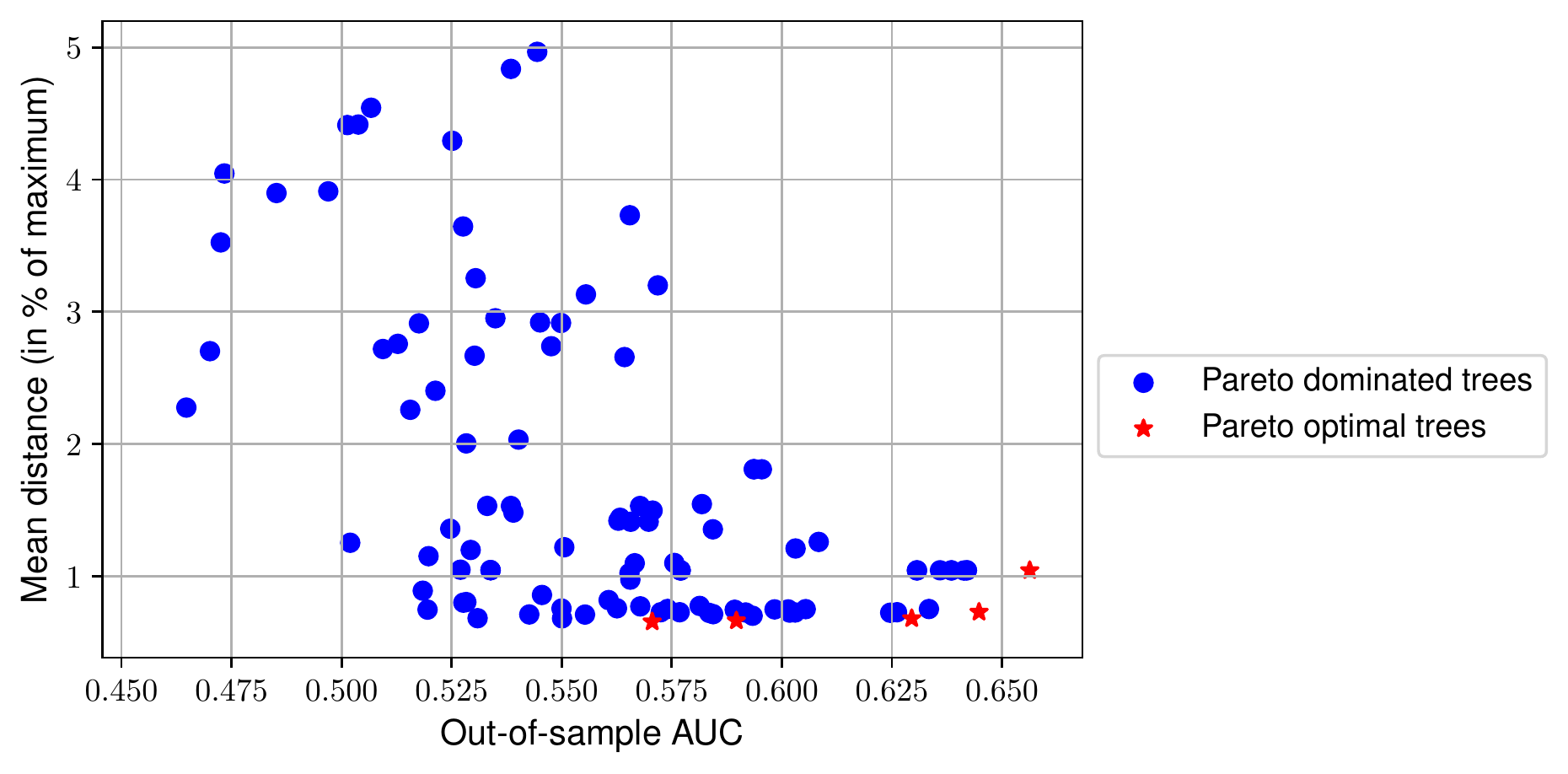}
\caption{TAVR} \label{fig:frontier-TAVR}
\end{subfigure}

\medskip
\begin{subfigure}{0.48\textwidth}
\includegraphics[width=\linewidth]{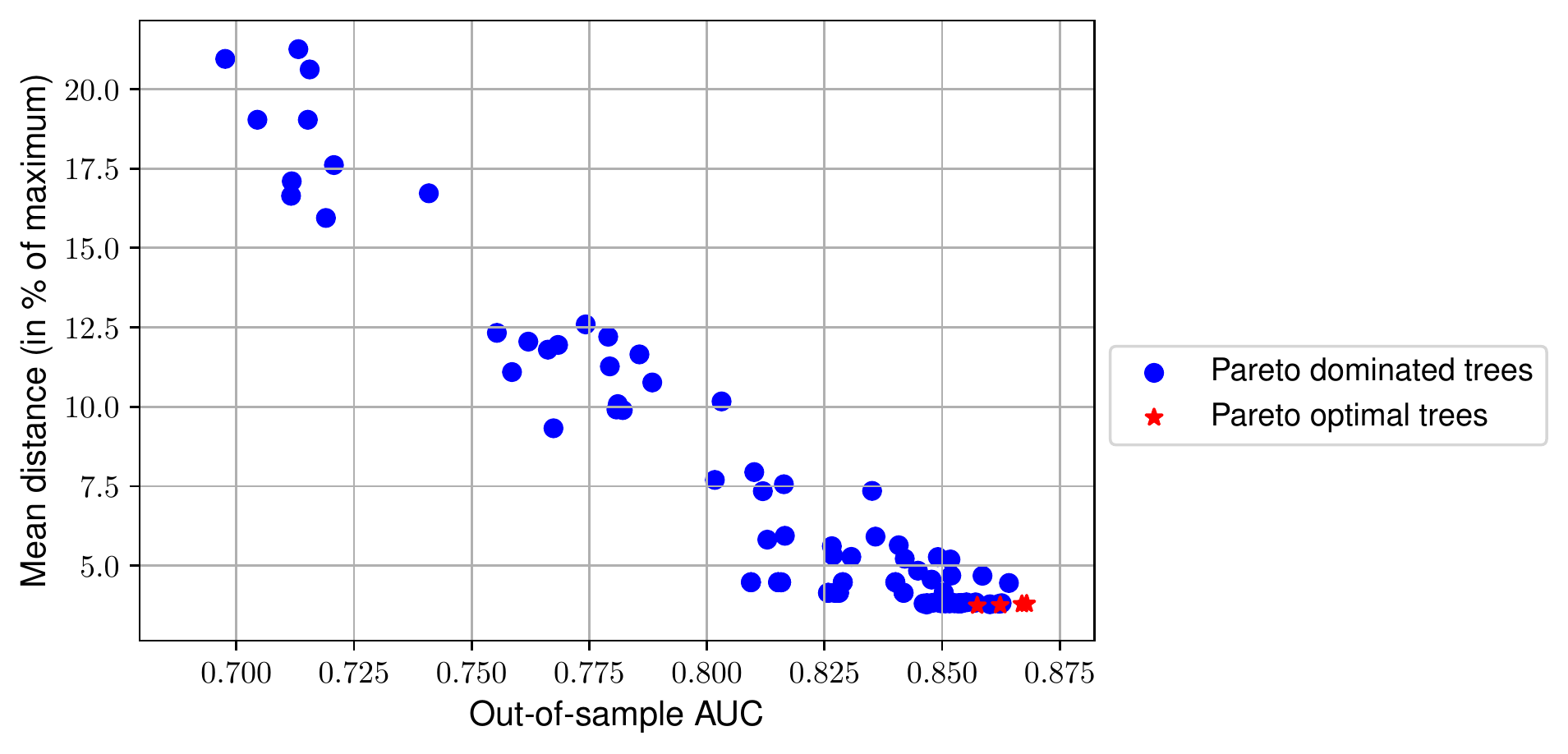}
\caption{Splenic injury} \label{fig:frontier-spleen}
\end{subfigure}\hspace*{\fill}
\begin{subfigure}{0.48\textwidth}
\includegraphics[width=\linewidth]{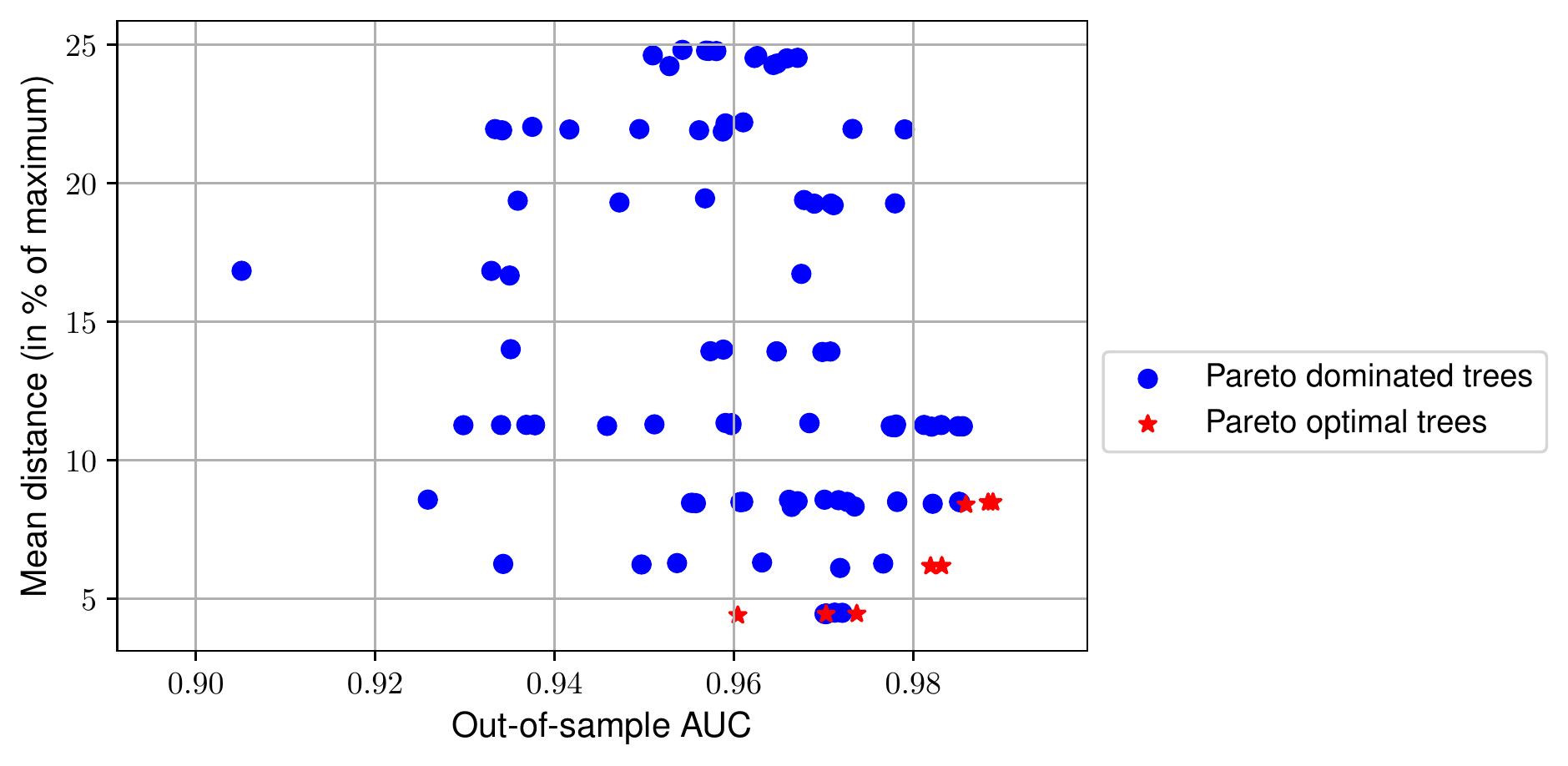}
\caption{Breast cancer} \label{fig:frontier-bc}
\end{subfigure}

\caption{Trade-off between stability and predictive performance for collection of trees.} \label{fig:frontiers}
\end{figure}

\begin{figure}[!ht]
\centering
\includegraphics[width=0.6\linewidth]{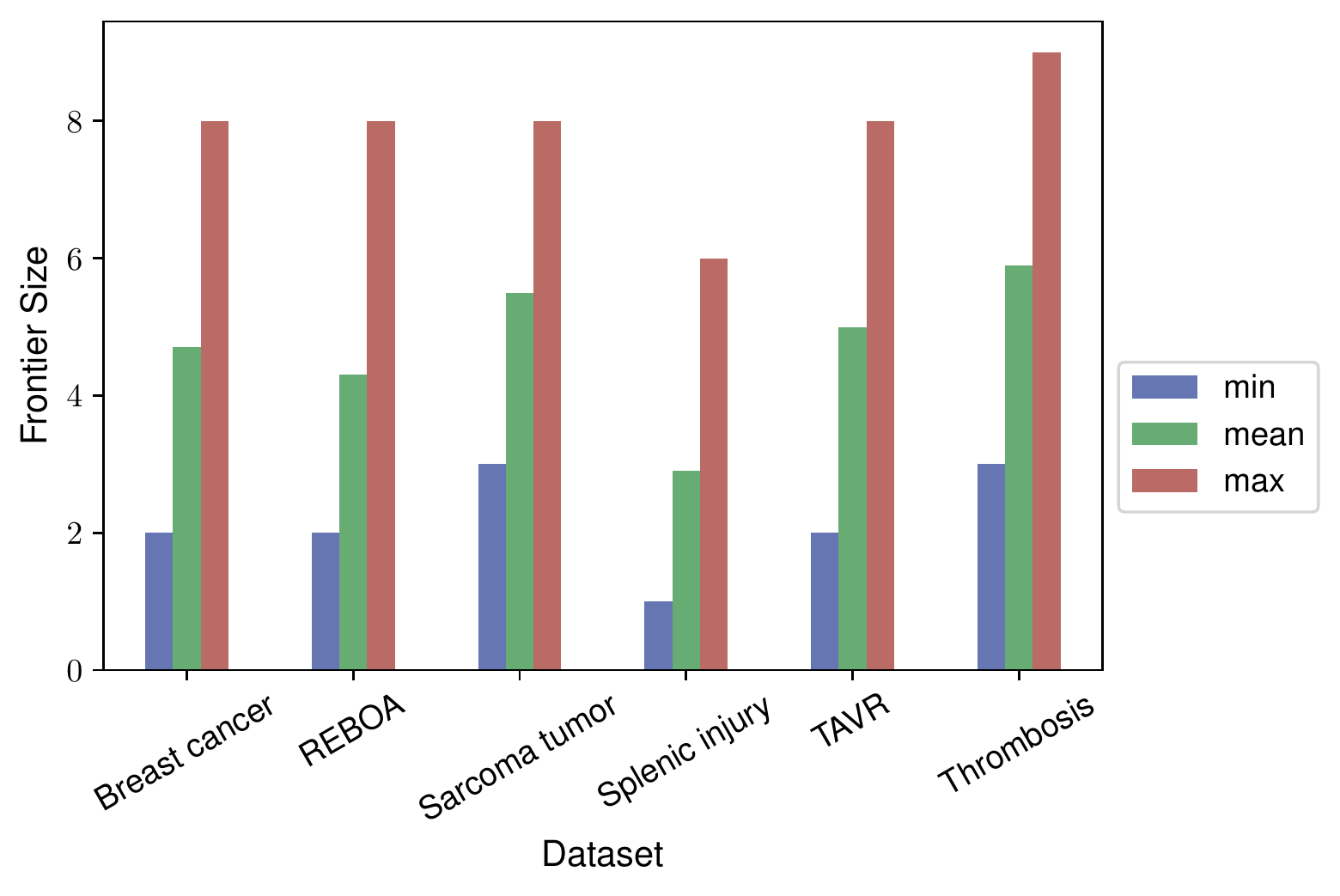} 
\caption{Minimum, mean, and maximum number of Pareto optimal trees for each dataset across 10 data splits.} \label{fig:bar-frontier}
\end{figure}

\subsection{The Effect of Stability on Predictive Performance} \label{ss:stability-accuracy}

This section aims to further understand the trade-off between stability and predictive performance. In Figure \ref{fig:bar-accuracy}(left), we benchmark, for each case study and in terms of their out-of-sample AUC, two Pareto optimal trees (the AUC-maximizing, referred to as ``\verb|CART Pareto AUC|'', and the distance-minimizing, referred to as ``\verb|CART Pareto Distance|'') against the best tree obtained using a standard 5-fold cross-validation procedure (referred to as ``\verb|CART CV|'') and against random forest (``\verb|RF'|'). Random forest is known to significantly improve upon the stability and predictive performance of decision trees \cite{breiman2001random}, at the expense of interpretability. We make the following observations:
\begin{itemize}
    \item In all case studies, \verb|CART Pareto AUC| significantly outperforms \verb|CART Pareto Distance| and \verb|CART CV|. In the Sarcoma tumor and TAVR case studies, \verb|CART Pareto AUC| outperforms \verb|RF|, whereas in Breast cancer it competes closely.
    \item \verb|CART Pareto Distance| outperforms \verb|CART CV| in three case studies in terms of mean AUC and in four case studies in terms of standard deviation. This leads to the conclusion that \verb|CART Pareto Distance| strictly dominates \verb|CART Pareto CV|; thus, in the sequel, we compare the two Pareto optimal trees against \verb|RF|.
\end{itemize}

In addition, for each case study, we report aggregated results on the mean distance of \verb|CART Pareto AUC| and \verb|CART Pareto Distance| from the trees in the first batch (Figure \ref{fig:bar-accuracy}(right)). We see that, except for the Breast cancer case study, the mean distance, for both trees, never exceeds 5\% of the maximum possible distance, while the standard deviation of the distance (obtained over multiple repetitions of each experiment) is also small. This suggests that the proposed methodology enforces a significant level of stability.

\begin{figure}[!ht]
\begin{subfigure}{0.48\textwidth}
\includegraphics[width=\linewidth]{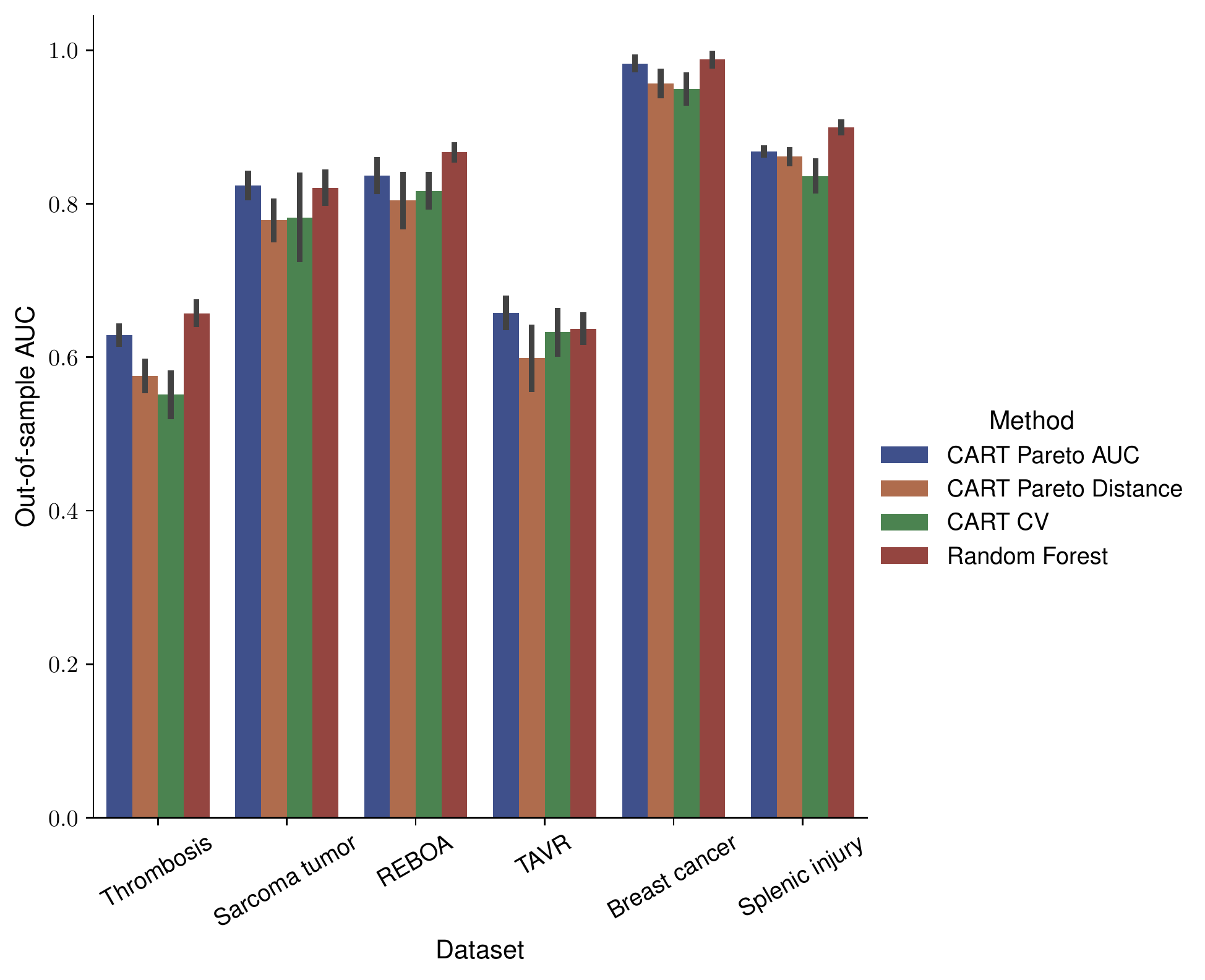} 
\end{subfigure}\hspace*{\fill}
\begin{subfigure}{0.48\textwidth}
\includegraphics[width=\linewidth]{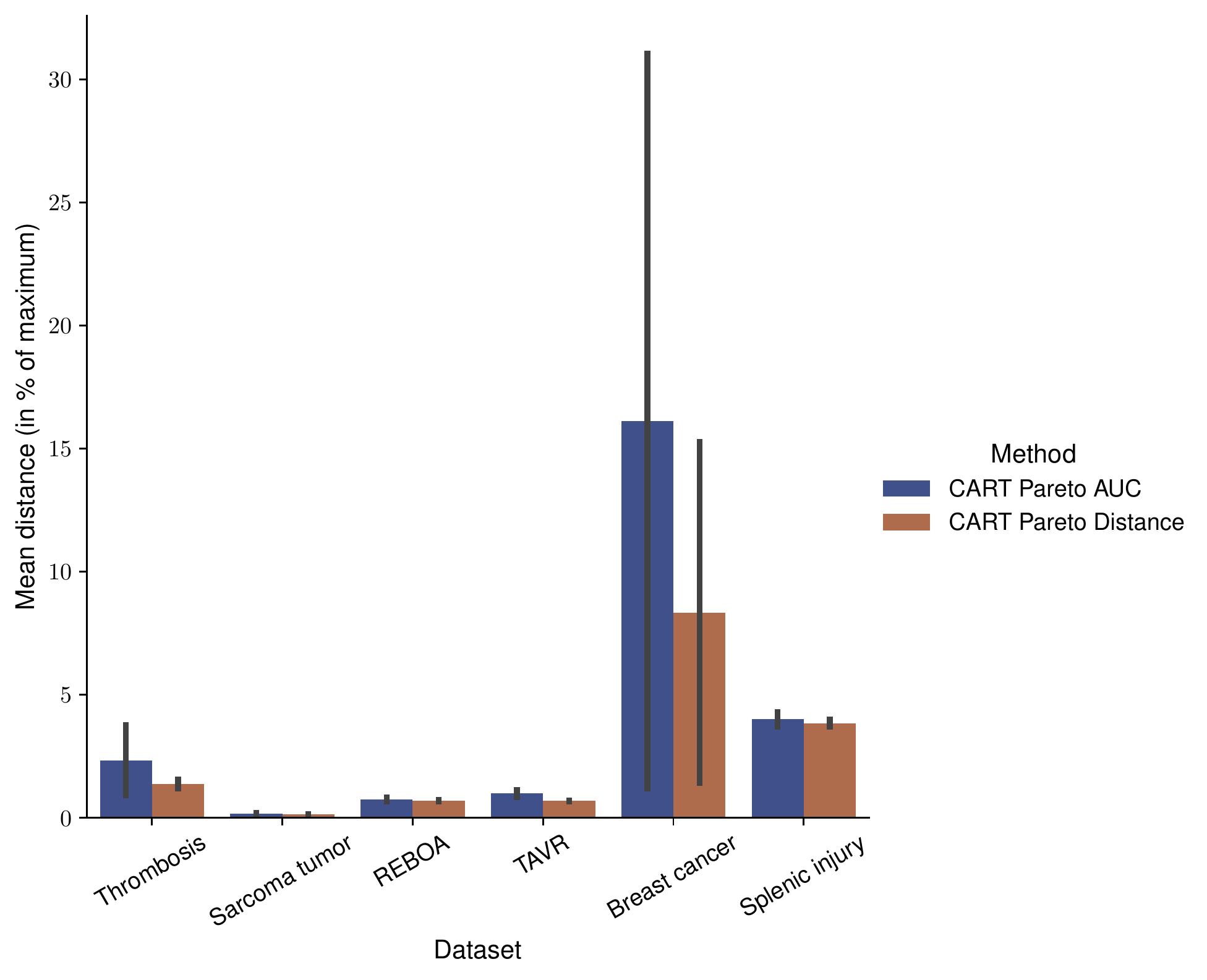} 
\end{subfigure}
\caption{Mean and standard deviation of out-of-sample AUC (left) and stability (right) for each dataset across 10 data splits. In the left figure, we compare the AUC-maximizing Pareto optimal tree, the distance-minimizing Pareto optimal tree, the tree obtained using cross-validation, and random forest. In the right figure, we study stability in terms of the mean distance between the AUC-maximizing/distance-minimizing Pareto optimal tree and the trees obtained using the first batch of training data.} \label{fig:bar-accuracy}
\end{figure}

\subsection{The Effect of Stability on Interpretability} \label{ss:stability-interpretability}

We now investigate what implications our notion of stability has on the interpretability of the selected trees. To quantify interpretability, we compare the tree depth (Figure \ref{fig:bar-interpretability}(left)) and number of nodes in the tree (Figure \ref{fig:bar-interpretability}(right)) between \verb|CART Pareto AUC|, \verb|CART Pareto Distance|, and the largest tree in \verb|RF|. In five out of six case studies, the proposed methodology results in much simpler trees compared to \verb|RF|, which, on average, are half as deep and consist of two to ten times fewer nodes. Moreover, in four out of six case studies, the \verb|CART Pareto AUC| tree is deeper and consists of more nodes compared to the \verb|CART Pareto Distance| tree. This suggests that the proposed notion of stability is more likely to result in simpler and hence more interpretable trees.

\begin{figure}[!ht]
\begin{subfigure}{0.48\textwidth}
\includegraphics[width=\linewidth]{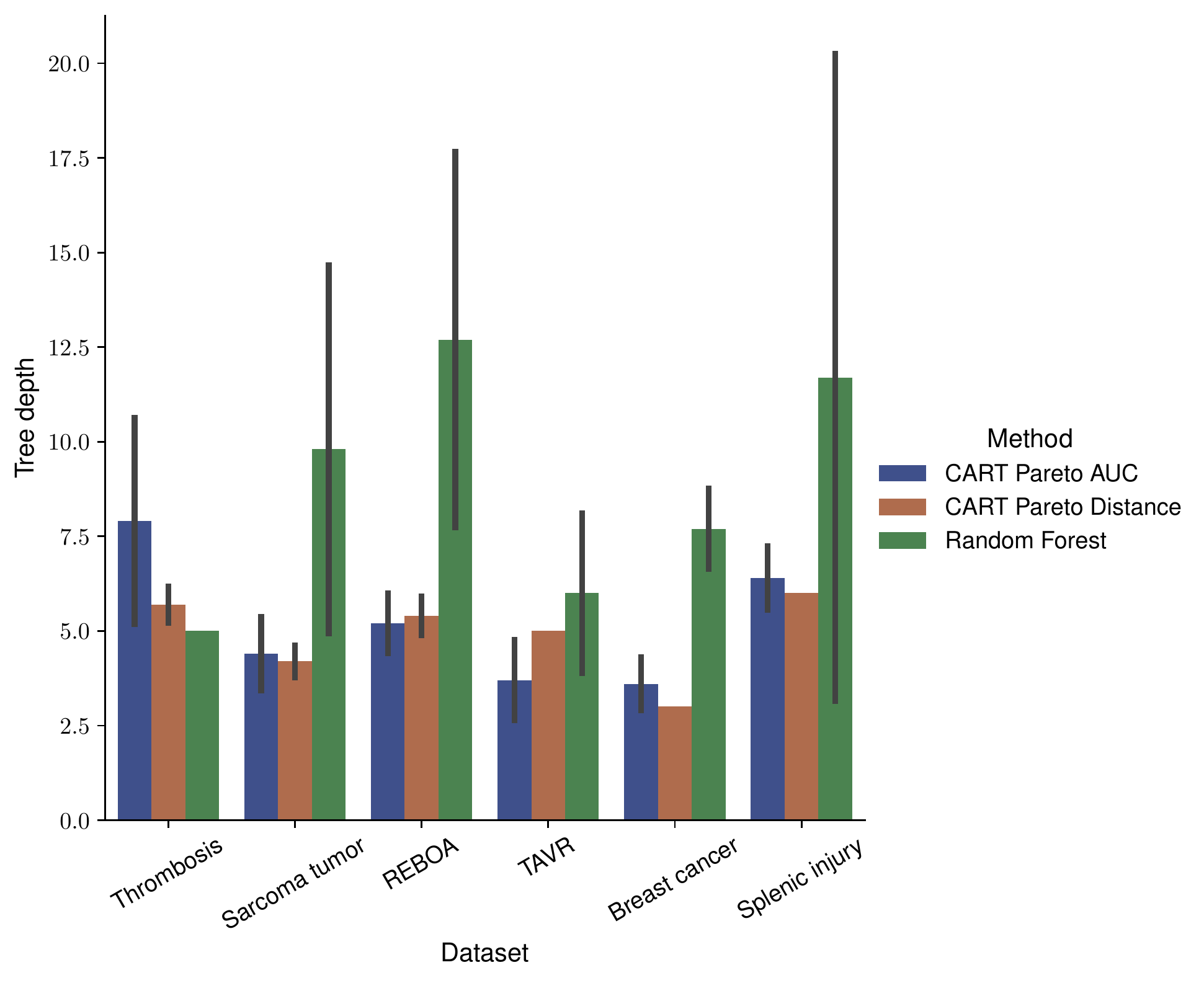} 
\end{subfigure}\hspace*{\fill}
\begin{subfigure}{0.48\textwidth}
\includegraphics[width=\linewidth]{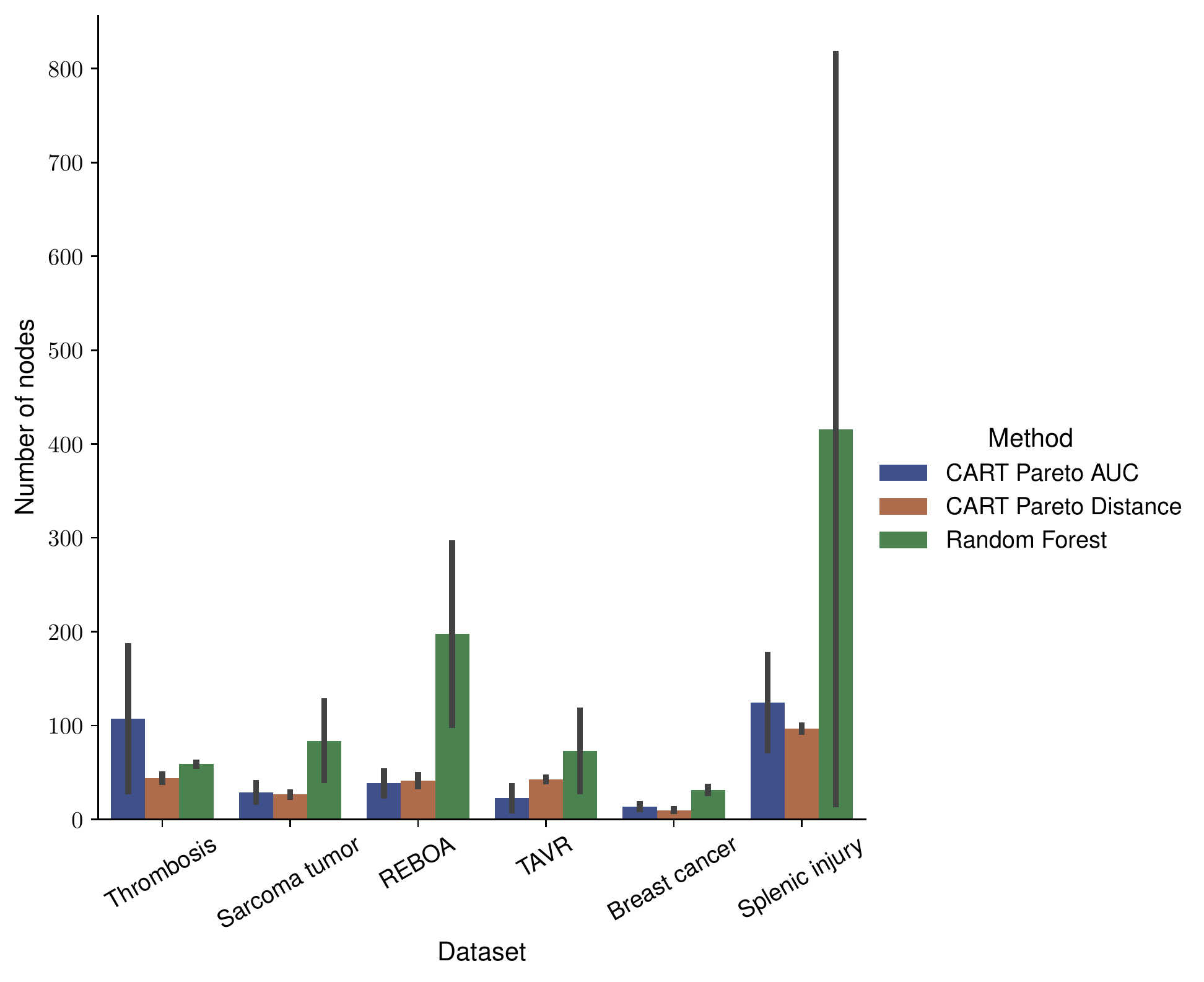} 
\end{subfigure}
\caption{Mean and standard deviation of tree depth (left) and number of nodes (right) for each dataset across 10 data splits. We compare the AUC-maximizing Pareto optimal tree, the distance-minimizing Pareto optimal tree, and the largest tree in random forest.} \label{fig:bar-interpretability}
\end{figure}

\subsection{The Effect of Stability on Feature Importance} \label{ss:stability-feature-selection}

We now study the effect of the proposed notion of stability on feature importances (or, equivalently, relevances), measured through the Gini importance \citep{hastie2009elements, menze2009comparison}. The Gini importance is a commonly used feature importance proxy that relies on the Gini impurity, which, in turn, measures the homogeneity of the target variable within different nodes in the tree. More concretely, the Gini impurity is a measure of how often a randomly chosen training data point from a node would be incorrectly labeled if it was randomly labeled according to the distribution of labels in the node. The Gini importance of a feature is computed as the (normalized) total reduction of the Gini impurity brought by that feature, by measuring how often the feature was selected for a split and how large its overall discriminative value was. 

We compare the standard deviation of feature importances averaged across all features (Figure \ref{fig:bar-feature-selection}(left)) and the total number of distinct features ranked as top-3 based on their feature importances (Figure \ref{fig:bar-feature-selection}(right)) between \verb|CART Pareto AUC|, \verb|CART Pareto Distance|, and \verb|RF|. \verb|RF| leads to smaller standard deviation and fewer distinct features marked as important; between \verb|CART Pareto AUC| and \verb|CART Pareto Distance|, the latter achieves smaller standard deviation in four out of six studies and marks more features as important in only one out of six studies hence suggesting that the proposed stability notion is positively correlated with having small deviations in feature importances.

\begin{figure}[!ht]
\begin{subfigure}{0.48\textwidth}
\includegraphics[width=\linewidth]{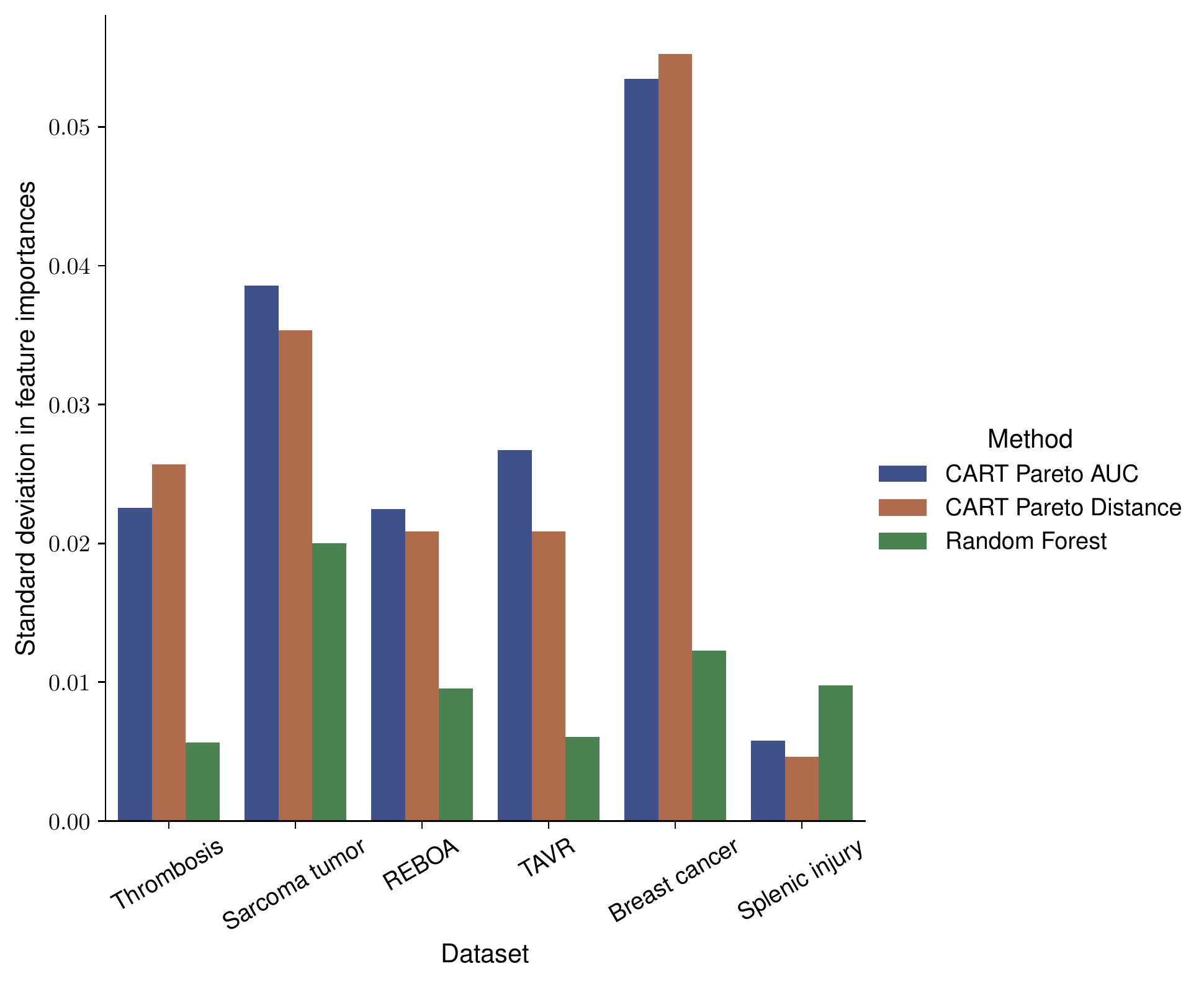} 
\end{subfigure}\hspace*{\fill}
\begin{subfigure}{0.48\textwidth}
\includegraphics[width=\linewidth]{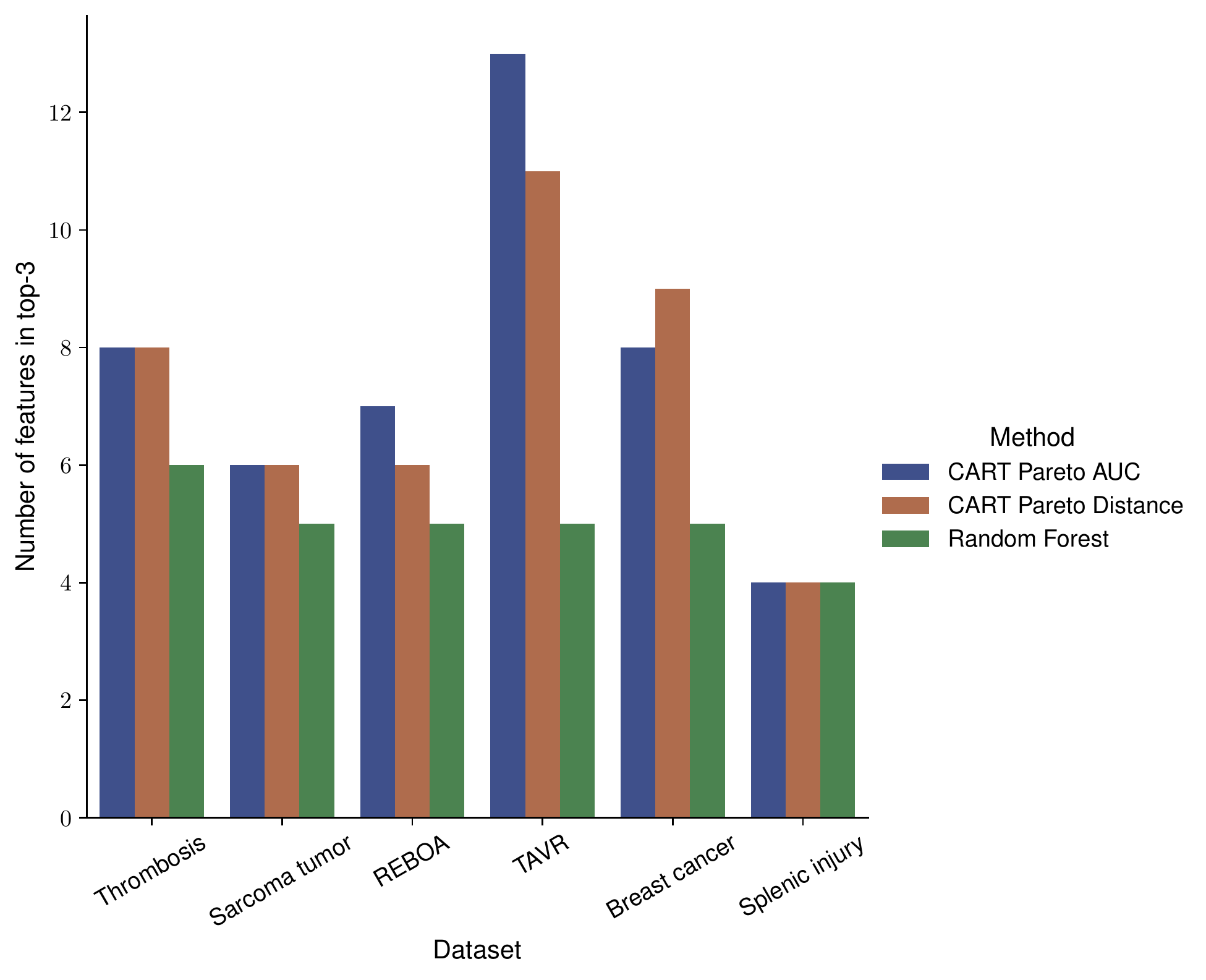}
\end{subfigure}
\caption{Standard deviation of feature importances, averaged across all features (left), and total number of distinct features ranked as top-3 based on their feature importances (right) for each dataset across 10 data splits. } \label{fig:bar-feature-selection}
\end{figure}

\subsection{Main Takeaways} \label{ss:pareto-takeaways}

We close this section by providing, in  Table \ref{tab:takeaways}, a summary comparison between the AUC-maximizing and the distance-minimizing Pareto optimal trees across all experiments. The price of a 38\% improvement in stability is, on average, 0.04 (or 4.625\%) in AUC. The distance-minimizing Pareto optimal tree reduces the standard deviation in feature importances by 3.6\%, the number of distinct important features by 4.4\%, the number of nodes in the tree by 22.2\%, and the tree depth by 6.2\%.

\begin{table}[!ht]
\caption{Aggregated comparison between the two extreme Pareto optimal trees.} \label{tab:takeaways}
\resizebox{\textwidth}{!}{
\begin{tabular}{cccccccccc}
\toprule
\textbf{Method} &
  \textbf{AUC} &
  \textbf{Distance} &
  \textbf{Feat. Import. Std.} &
  \textbf{Feat. in Top-3} &
  \textbf{Nodes} &
  \textbf{Tree Depth} \\ \midrule
\texttt{CART Pareto AUC} &
  0.8 (0.013) &
  4.056 (2.818) &
  0.028 &
  7.667 &
  55.867 &
  5.2 \\
\texttt{CART Pareto Distance} &
  0.763 (0.024) &
  2.511 (1.219) &
  0.027 &
  7.333 &
  43.5 &
  4.883 \\ \bottomrule
\end{tabular}}
\end{table}

\section{Case Studies} \label{s:case-studies}

In this section, we describe in detail the six real-world case studies we have used throughout the paper (Section \ref{ss:description-case-studies}) and analyze the trees trained using the proposed framework (Section \ref{ss:stability-qualitative}). 

\subsection{Description of the Case Studies} \label{ss:description-case-studies}

All case studies come from the health care space due to the relevance of interpretable ML in this space. In the first five case studies, we obtain the data from collaborations with major US hospitals, including the Massachusetts General Hospital (MGH), the Hartford Hospital (HH), and the Memorial Sloan Kettering Cancer Center (MSK); for reproducibility purposes, in the last case study (``breast cancer''), we use a publicly available dataset. Table \ref{tab:datasets} provides a description of the datasets we use, including the origin of each dataset, the number of samples and features, and the outcome prevalence (i.e., the proportion of samples with the health outcome under consideration).

\begin{table}[!ht]
\caption{Summary of datasets we use in our case studies.} \label{tab:datasets}
\resizebox{\textwidth}{!}{
\begin{tabular}{ccccc}
\toprule
\textbf{Dataset} & \textbf{Origin} & \textbf{Number of Samples} & \textbf{Number or Features} & \textbf{Outcome Prevalence (in \%)} \\ \midrule
Thrombosis     & MGH Trauma Department        & 21,549 & 35 & 1.59  \\
Sarcoma tumor  & MSK                          & 930    & 17 & 16.56 \\
REBOA          & MGH Trauma Department        & 10,000 & 30 & 31.27 \\
TAVR           & HH Cardiovascular Department & 2,148  & 42 & 15.18 \\
Splenic injury & MGH Trauma Department        & 35,954 & 41 & 6.1   \\
Breast cancer  & UCI                          & 569    & 30 & 37.26 \\ \bottomrule
\end{tabular}
}
\end{table}

\paragraph{Thrombosis.} We are interested in predicting the risk of deep vein thrombosis (DVT) after endovenous thermal ablation. The features include demographic factors (e.g., age, sex, ethnicity) as well as categorical (e.g., cigarette smoking history, wound infection, baseline dyspnea) and continuous (e.g., preoperative measurements such as creatinine, hematocrit, platelet) risk factors. %treatment-related factors, 

DVT is the most common cause of chronic venous disease, a widespread disease with an annual incidence of 1-2\% and prevalence up to 73\% in women and 56\% in men. Endovenous treatments, such as thermal and laser ablation, are safe and effective, offering better outcomes and lower complications compared to traditional surgery. Yet, there are still risks to these procedures, including the development of a DVT, with a complication rate of around 1\%, due to the subsequent risk of a pulmonary embolism. The risk of developing DVT ranges substantially in each individual patient due to specific risk factors, which motivates the study and development of prediction tools that estimate the risk of DVT after endovenous ablation \citep{marsh2010deep}.% A prediction tool could identify high risk patients that need proper postoperative anticoagulation and close follow up compared to very low risk patients do not need stringent measures. 

% For our study, we obtain patient level data, consisting of 21,549 samples and 35 features, from the MGH Trauma Department. 

\paragraph{Sarcoma tumor.} We examine the effect of radiotherapy on reducing local recurrence within five years to patients with sarcoma tumor. The features include demographic factors (e.g., age at surgery), categorical (e.g., radiosensitivity) and continuous (e.g., primary tumor size) risk factors, and treatment-related factors.

Sarcomas are rare cancers that develop in the bones and soft tissues, including fat, muscles, blood vessels, nerves, deep skin tissues, and fibrous tissues. According to the National Cancer Institute, about 12,000 cases of soft tissue sarcomas and 3,000 cases of bone sarcomas are diagnosed in the U.S. each year. Sarcoma is treated with a combination of chemotherapy, radiation therapy, and surgery. Patients who have received radiation therapy for previous cancers may have a higher risk of developing a sarcoma. E.g., after treatment of primary soft tissue sarcomas, 11\% to 14\% of patients develop local recurrence \citep{eilber2003high}, which may require additional surgery, radiotherapy, or even amputation, and may predict decreased overall survival. Most local recurrences arise in the first 5 years after diagnosis \citep{gadd1993development}. % https://www.hopkinsmedicine.org/health/conditions-and-diseases/sarcoma

% For our study, we obtain patient level data, consisting of 930 samples and 17 features, from MSK. 

\paragraph{REBOA.} We study whether, using ML, we can decrease the misuse of resuscitative endovascular balloon occlusion of the aorta (REBOA) in hemodynamically unstable blunt trauma patients. The features include demographic factors, categorical (e.g., Glasgow Coma Scale) and continuous (e.g., pulse and respiratory rates) risk factors, and treatment-related factors.

REBOA is a procedure that involves placement of an endovascular balloon in the aorta to control bleeding, augment afterload, and maintain blood pressure temporarily in traumatic hemorrhagic shock. The balloon blocks the artery and temporarily stops the blood flow giving doctors time to operate, but maintains blood circulation in the brain and heart. However, the parts of the body below the balloon are cut off from the normal blood flow, which may result in short- or longer-term problems \citep{okada2017resuscitative,jansen2022effectiveness}.

% We obtain patient level data, consisting of 10,000 samples and 30 features, from the MGH Trauma Department. 

\paragraph{TAVR.} We investigate whether using the appropriate valve type in a transcatheter aortic valve replacement (TAVR) procedure can reduce the need for pacemaker. The features include demographic factors, categorical (e.g., diabetes) and continuous (e.g., left ventricular ejection fraction) risk factors, and treatment-related factors (manufacturer and type of the valve). TAVR is a minimally invasive heart procedure to replace a thickened aortic valve that cannot fully open, in which case blood flow from the heart to the body is reduced. TAVR can help restore blood flow and may be an option for people who are at risk of complications from surgical aortic valve replacement: in those patients, TAVR significantly reduces the rates of death and cardiac symptoms \citep{carabello2011transcatheter}. Nevertheless, there are multiple risks associated with TAVR, e.g., problems with the replacement valve (e.g., the valve slipping out of place or leaking), arrhythmias, and the need for a pacemaker. % https://www.mayoclinic.org/tests-procedures/transcatheter-aortic-valve-replacement/about/pac-20384698

% For our study, we obtain patient level data, consisting of 2,148 samples and 42 features, from the HH Cardiovascular Department. 

\paragraph{Splenic injury.} We explore how different treatments affect mortality of victims of blunt splenic injury. The features include demographic factors, categorical (e.g., splenic injury grade) and continuous (e.g., pulse and respiratory rate) risk factors, and treatment-related factors (splenectomy, angioembolization, or observation).

Blunt splenic injury occurs when a significant impact from some outside source (e.g., automobile accident) damages or ruptures the spleen. The traditional treatment has been splenectomy, the surgical procedure that partially or completely removes the spleen. The spleen is an important organ in regard to immunological function due to its ability to efficiently destroy encapsulated bacteria; its removal runs the risk of overwhelming post-splenectomy infection, a medical emergency and rapidly fatal disease caused by the inability of the body's immune system to properly fight infection following splenectomy \citep{taniguchi2014overwhelming}. Therefore, whenever possible, splenectomy is avoided to prevent the resulting permanent susceptibility to bacterial infections: most small, and some moderate-sized lacerations in stable patients are managed with hospital observation and sometimes transfusion rather than surgery; angioembolization, blocking off of the hemorrhaging vessels, is a less invasive treatment \citep{thompson2006novel}.

% For our study, we obtain patient level data, consisting of 35,954 samples and 41 features, from the MGH Trauma Department. 

\paragraph{Breast cancer.} We predict whether a breast cancer is benign or malignant using features computed from a digitized image of a fine needle aspirate of a breast mass. The features describe characteristics of the cell nuclei present in the image and include, specifically, for each nucleus, information about its radius, texture, perimeter, area, smoothness, compactness, concavity, number of concave points, summetry, and fractal dimension \citep{wolberg1994machine,mangasarian1995breast}. The data is publicly available at the UCI ML repository at \url{https://archive.ics.uci.edu/ml/datasets/breast+cancer+wisconsin+(diagnostic)}. %, consisting of 569 samples and 30 features,$

\subsection{Qualitative Analysis of the Selected Trees}  \label{ss:stability-qualitative}

We now take a qualitative approach to analyzing the stability of the trees trained using the proposed framework. For each case study, we choose the tree from $\mathcal{T}^\star$ that maximizes Equation~\eqref{eq:select-tree}. We show, for each such tree, the first two levels of splits (at the root node, its left, and its right child) in Figure \ref{fig:trees}. More specifically, for each split node, we present the split feature and threshold, the proportion of total samples, the proportion of samples from each class, and the class label at that node. We compare the selected trees with the results given in Table \ref{tab:tree-analysis}, where, for each case study and among all trees in the full collection $\mathcal{T}$, we record the two most commonly selected features in each of the first three splits, along with their selection frequencies.

\begin{figure}[!ht]
\centering
\begin{subfigure}{0.48\textwidth}
\includegraphics[width=\linewidth]{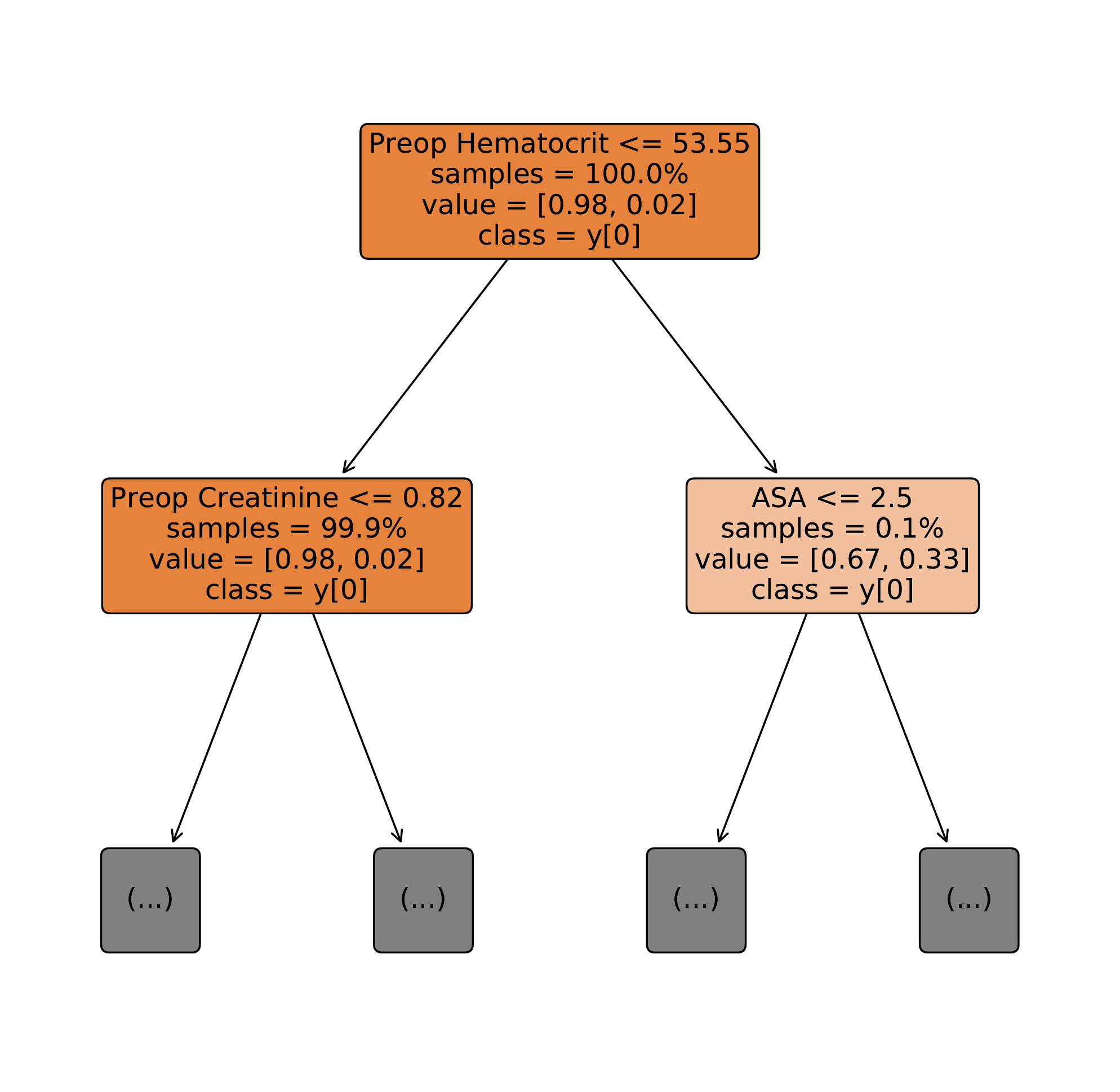}
\caption{Thrombosis}  \label{tab:tree-thrombosis}
\end{subfigure}\hspace*{\fill}
\begin{subfigure}{0.48\textwidth}
\includegraphics[width=\linewidth]{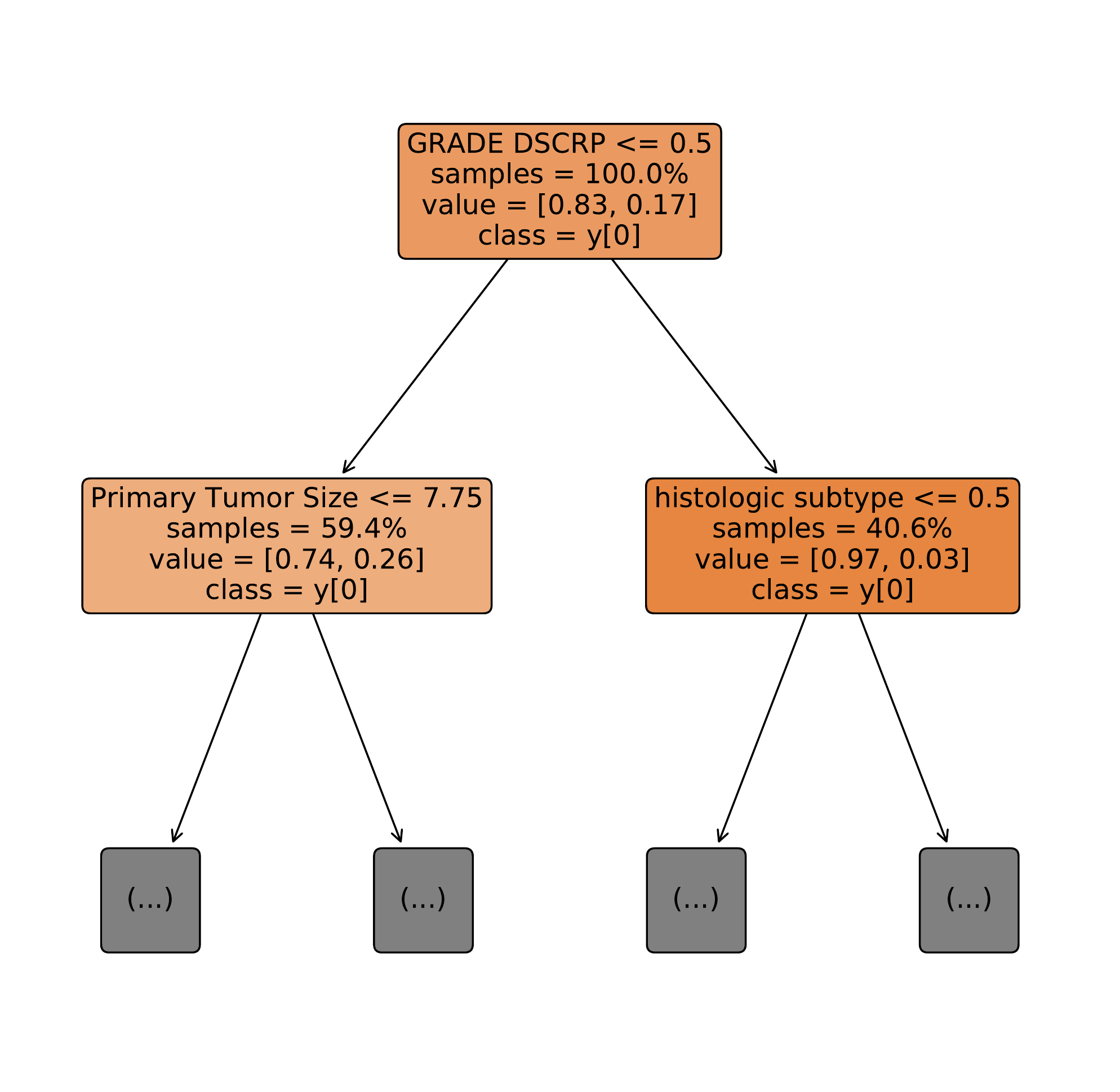}
\caption{Sarcoma Tumor} \label{tab:tree-sarcoma}
\end{subfigure}

\end{figure}%
\begin{figure}[ht]\ContinuedFloat

\medskip
\begin{subfigure}{0.48\textwidth}
\includegraphics[width=\linewidth]{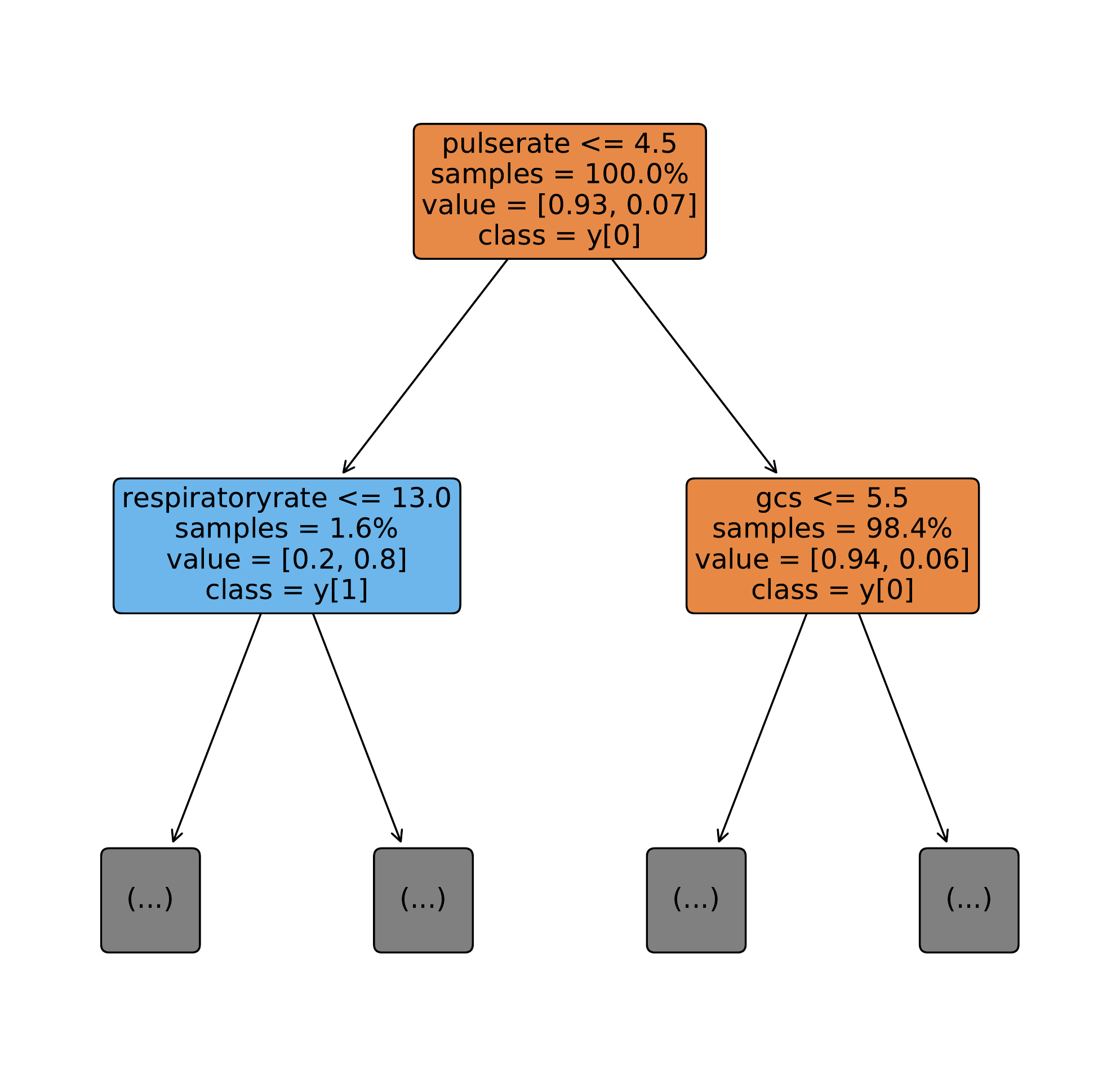}
\caption{REBOA} \label{tab:tree-reboa}
\end{subfigure}\hspace*{\fill}
\begin{subfigure}{0.48\textwidth}
\includegraphics[width=\linewidth]{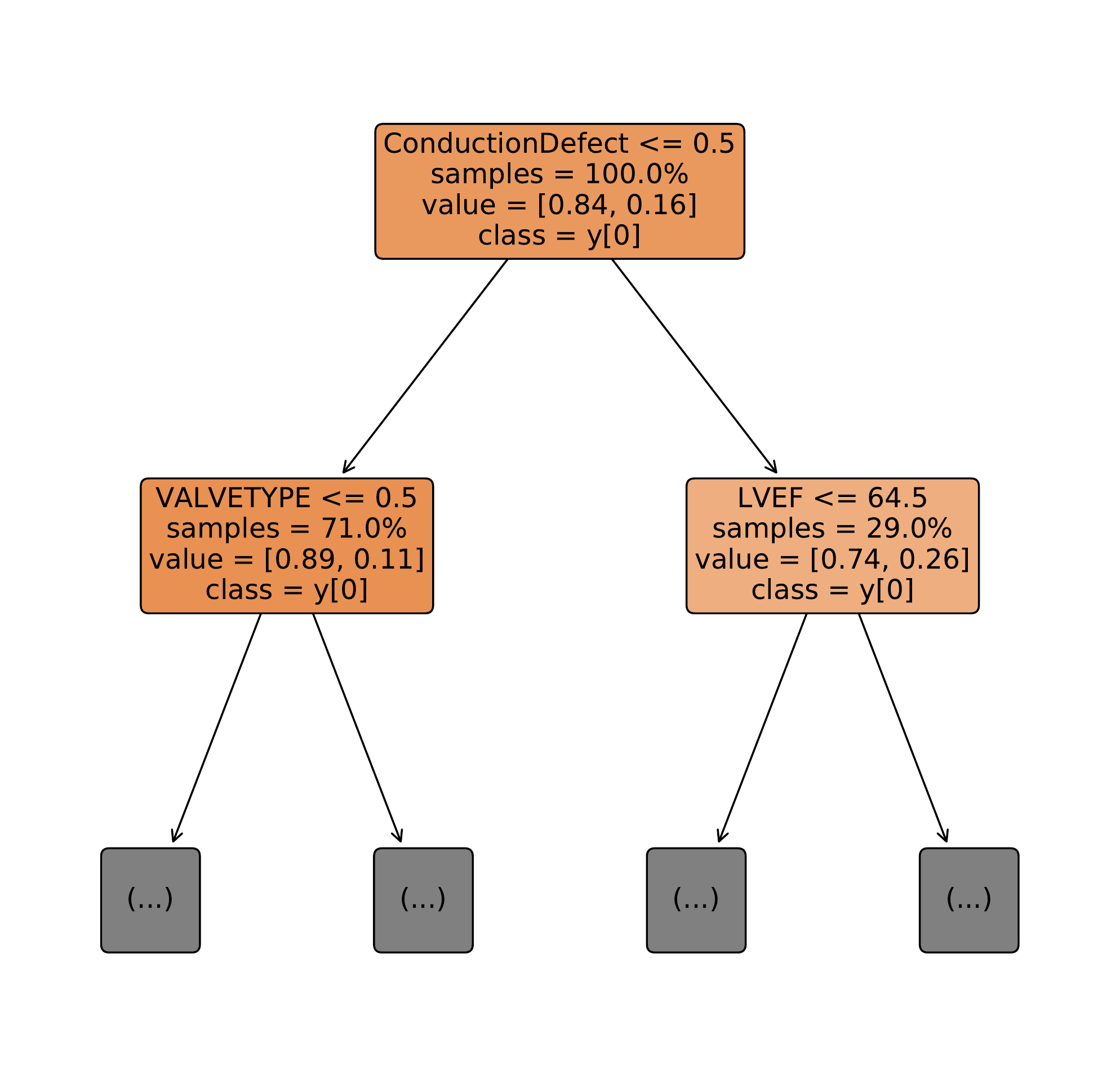}
\caption{TAVR} \label{tab:tree-tavr}
\end{subfigure}

\end{figure}%
\begin{figure}[ht]\ContinuedFloat

\medskip
\begin{subfigure}{0.48\textwidth}
\includegraphics[width=\linewidth]{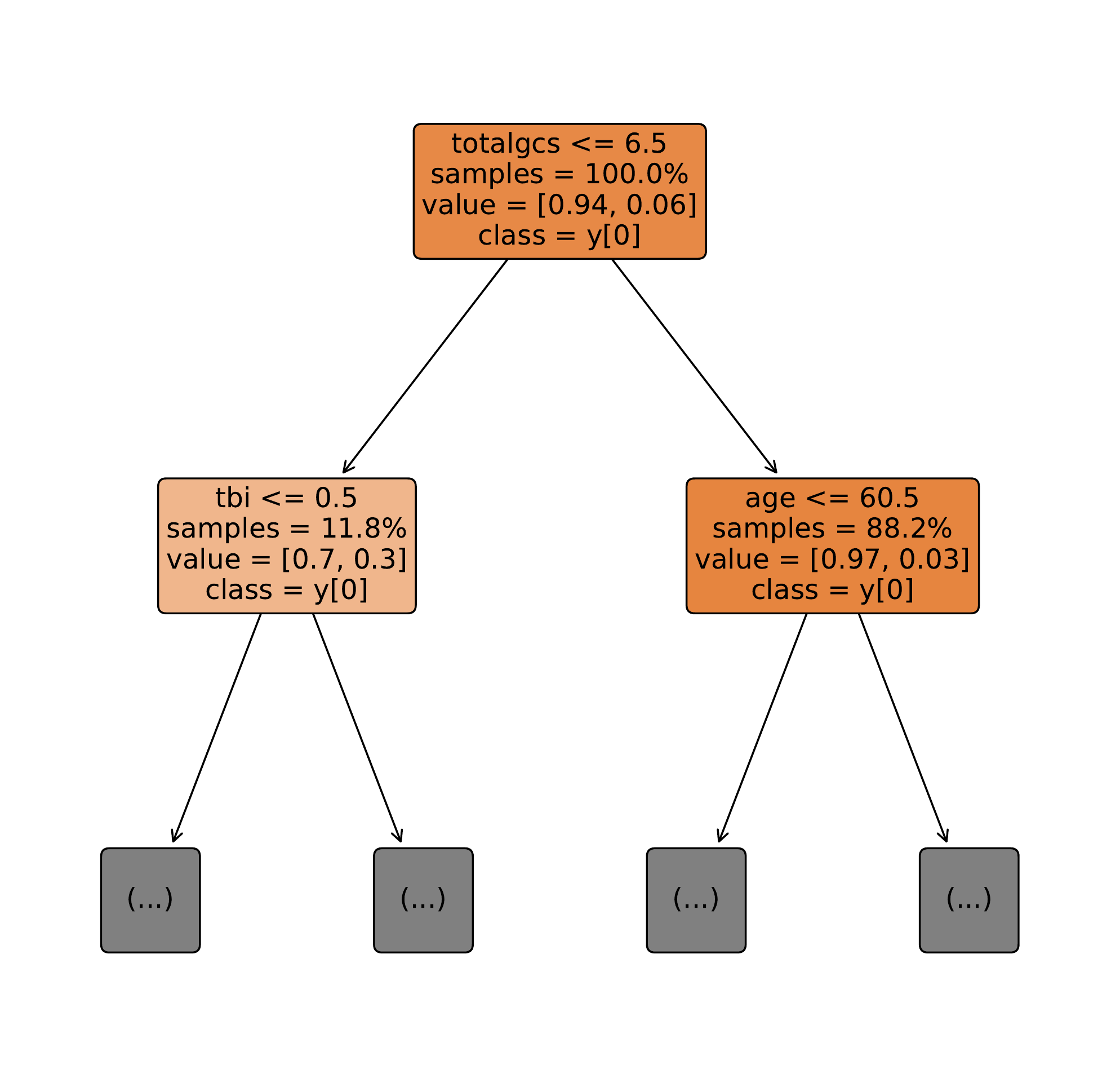}
\caption{Splenic injury} \label{tab:tree-spleen}
\end{subfigure}\hspace*{\fill}
\begin{subfigure}{0.48\textwidth}
\includegraphics[width=\linewidth]{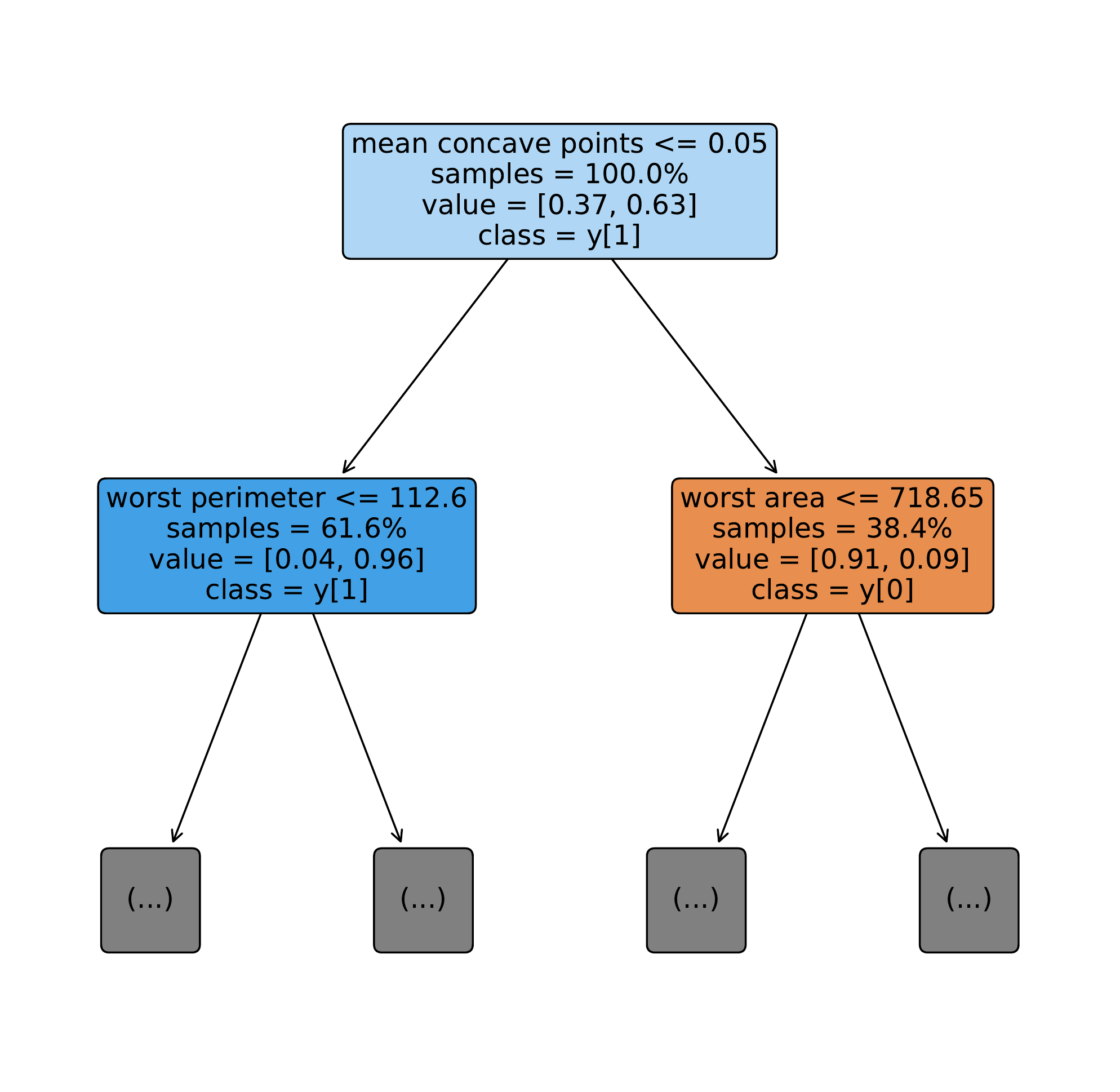}
\caption{Breast cancer} \label{tab:tree-breast}
\end{subfigure}

\caption{Visualization of the first two levels of splits in the Pareto optimal tree obtained using Equation~\eqref{eq:select-tree} for each case study.} \label{fig:trees}
\end{figure}

\begin{table}[!ht]
\caption{Two most commonly selected features (and their corresponding selection frequencies) in each of the first two levels of splits, among all trained trees and across all repetitions for each case study.} \label{tab:tree-analysis}
\resizebox{\textwidth}{!}{
\begin{tabular}{cccccc}
\textbf{Dataset} &
  \textbf{Split} &
  \textbf{Feature 1} &
  \textbf{Frequency (in \%)} &
  \textbf{Feature 2} &
  \textbf{Frequency (in \%)} \\ \toprule
& Root  & Preop Hematocrit     & 59.76  & Preop Platelet      & 14.33 \\
& Left  & Preop Platelet       & 18.86  & Age                 & 16.48 \\
\multirow{-3}{*}{Thrombosis}     & Right & \textit{Leaf node}                 & 60.48  & Preop Hematocrit    & 6.38  \\ \midrule
& Root  & GRADE DSCRP          & 47.33  & Chemo 2             & 25.00 \\
& Left  & Primary Tumor Size   & 59.00  & Chemo 2             & 12.33 \\
\multirow{-3}{*}{Sarcoma tumor} &
  Right &
  GRADE DSCRP &
  30.33 &
  margin width &
  29.10 \\ \midrule
& Root  & pulserate            & 81.00  & gcs                 & 10.00 \\
& Left  & age                  & 24.10  & respiratoryrate     & 18.71 \\
\multirow{-3}{*}{REBOA}          & Right & gcs                  & 90.00  & respiratoryrate     & 5.33  \\ \midrule
& Root  & ConductionDefect     & 63.00  & VALVETYPE           & 14.00 \\
& Left  & VALVETYPE            & 44.00  & ConductionDefect    & 13.00 \\
\multirow{-3}{*}{TAVR}           & Right & LVEF                 & 19.67  & Area-Oversize       & 16.00   \\ \midrule
& Root  & totalgcs             & 100.00 & -                & -  \\
& Left  & tbi                  & 70.00  & age                 & 30.00 \\
\multirow{-3}{*}{Splenic injury} & Right & age                  & 100.00 & -                & -  \\ \midrule
& Root  & worst perimeter      & 38.52  & mean concave points & 22.00 \\
& Left  & worst concave points & 35.38  & worst area          & 12.00 \\
\multirow{-3}{*}{Breast cancer}  & Right & worst texture        & 17.38  & \textit{Leaf node}               & 13.00 \\ \bottomrule
\end{tabular}
}
\end{table}

In summary, the selected trees are representative of the aggregate statistics. In \textit{TAVR} (Figure \ref{tab:tree-tavr}) and \textit{Splenic injury} (Figure \ref{tab:tree-spleen}), the trees perform all splits in the first two levels on the most commonly selected features across all trained trees, suggesting that the selected trees are indeed stable. For \textit{TAVR}, these features are the existence of a conduction defect (the root split is performed on this feature in 63\% of the trees), the type of valve used (left split in 44\%), and the left ventricular ejection fraction (right split in 20\%). For \textit{Splenic injury}, these features are the Glasgow Coma Scale (abbreviated ``totalgcs'', root split in 100\%), the existence of traumatic brain injury (abbreviated ``tbi'', left split in 70\%), and the age (right split in 100\%). \textit{REBOA} (Figure \ref{tab:tree-reboa}), the tree splits on the pulse rate (root split in 81\%) and then on the respiratory rate (second most common left split feature in 19\%) and the Glasgow Coma Scale (abbreviated ``gcs'', right split in 90\%). In \textit{Sarcoma tumor} (Figure \ref{tab:tree-sarcoma}), the tree splits on the sarcoma grade (suggesting how abnormal the cancer cells look under a microscope --- root split in 47\%) and then on the primary tumor size (left split in 59\%) and the histologic subtype (which is not among the most commonly selected features). In \textit{Thrombosis} (Figure \ref{tab:tree-thrombosis}), the tree splits on the preoperative hematocrit levels (root split in 60\%) and then on the preoperative creatinine levels and the ASA score, a metric to determine if someone is healthy enough to tolerate surgery and anesthesia (which are not among the most commonly selected features). Finally, in \textit{Breast cancer} (Figure \ref{tab:tree-breast}), the tree splits on the mean concave points feature (second most common root split feature in 22\%) and then on the worst perimeter (most common root split feature in 39\%) and the worst area (second most common left split feature in 12\%).

% \section{Computing Slowly Varying Decision Trees}

% \subsection{Training Slowly Varying Decision Trees}

% \subsection{Smooth Variability in Sequential Problems}

\section{Conclusion} 

In this paper, we have developed a methodology to improve the stability of decision tree models. The proposed methodology enables us to investigate trade-offs that are inherent to decision tree models and train stable, ``Pareto optimal'' decision trees; we demonstrate the value of the proposed approach through six extensive quantitative and qualitative case studies from the health care space, where interpretability is essential. Further, we introduce a novel distance metric for decision trees, which has been missing from the ML literature, and use it to determine a tree's level of stability; the proposed distance metric may be of independent interest and can be used in different contexts, including, e.g., as a new way to impose regularization in decision tree models. 

We conclude by returning to Leo Breiman's question, ``is there a more stable single-tree version of CART?'' Our work suggests that such a tree may indeed exist and, although it is unlikely to achieve the levels of stability of procedures that rely on averaging, the proposed Pareto optimal trees achieve improved predictive power (see, e.g., Section \ref{ss:stability-accuracy}), interpretability (see Section \ref{ss:stability-interpretability}), and are structurally stable (see Section \ref{ss:stability-qualitative}).

\section*{Acknowledgements}
We are grateful to Yu Ma for working with us on the data and for fruitful discussions, and to Agni Orfanoudaki and Wolfram Wiesemann for valuable comments. %We are grateful to the referees for their constructive comments.

%%%%%%%%%%%%%%%%%

\bibliographystyle{informs2014}
\bibliography{references.bib}

%%%%%%%%%%%%%%%%%

\begin{APPENDICES}

\section{Technical Proofs} \label{appx:proofs}

\subsection{Proof of Proposition~\ref{lem:metric}} \label{appx:proofs-metric}
Recall that we represent a path $p$ as a collection of two vectors (upper and lower bounds for numerical features), a matrix (categories for categorical features), and a scalar (class label): $(\bm u^p, \bm l^p, \bm C^p, k^p) \in \mathbb{R}^{|\mathcal{N}|} \times \mathbb{R}^{|\mathcal{N}|} \times \{0,1\}^{|\mathcal{C}| \times \max_j c_j} \times [K] \ := \ \mathcal{P}$. Without loss of generality, we flatten matrix $\boldsymbol C^p$ into a vector of appropriate dimension. We first prove the following auxiliary lemma:
\begin{lemma} \label{lem:metric-path}
    Let $\mathcal{P}$ denote the set of all paths of depth $D$ and $p, q \in \mathcal{P}$. Then
    $$d(p,q) = \sum_{j \in \mathcal{N}} \frac{\left|u^p_j - u^q_j\right| + \left|l^p_j - l^q_j\right|}{2(u_j - l_j)} + \sum_{j \in \mathcal{C}} \frac{\left\|\bm c^p_j-\bm c^q_j\right\|_1}{c_j} + \lambda \cdot \mathbbm{1}_{(k^p \neq k^q)}$$
    is a metric mapping $\mathcal{P}\times \mathcal{P} \mapsto \mathbb{R}.$
\end{lemma}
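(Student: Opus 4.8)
The plan is to verify the four metric axioms directly for $d(p,q)$, leveraging the observation that $d$ is a sum of three terms, each of which is itself a (pseudo)metric on the relevant coordinate. Specifically, the numerical-feature term involves absolute differences $|u^p_j - u^q_j|$ and $|l^p_j - l^q_j|$, which are $\ell_1$-type distances on $\mathbb{R}$; the categorical term involves $\|\bm c^p_j - \bm c^q_j\|_1$, the $\ell_1$ distance on $\{0,1\}^{c_j}$; and the class-label term $\lambda \cdot \mathbbm{1}_{(k^p \neq k^q)}$ is the discrete metric on $[K]$ scaled by $\lambda \ge 0$. The key structural fact I would invoke is that a nonnegative linear combination of metrics (on a common domain, or on product coordinates) is again a metric, so the whole argument reduces to checking the axioms termwise.

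First I would establish \emph{nonnegativity and identity of indiscernibles}. Each term is manifestly nonnegative since it is built from absolute values, $\ell_1$ norms, and an indicator weighted by $\lambda \ge 0$, with positive denominators $u_j - l_j > 0$ and $c_j > 0$. For $d(p,q) = 0$, every term must vanish: this forces $u^p_j = u^q_j$ and $l^p_j = l^q_j$ for all $j \in \mathcal{N}$, forces $\bm c^p_j = \bm c^q_j$ for all $j \in \mathcal{C}$, and forces $k^p = k^q$. Since a path is fully determined by the tuple $(\bm u^p, \bm l^p, \bm C^p, k^p)$, this yields $p = q$, and conversely $p = q$ trivially gives $d(p,q) = 0$. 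Next, \emph{symmetry} is immediate because $|a - b| = |b - a|$, $\|\bm a - \bm b\|_1 = \|\bm b - \bm a\|_1$, and $\mathbbm{1}_{(k^p \neq k^q)} = \mathbbm{1}_{(k^q \neq k^p)}$, so every term is symmetric in its arguments.

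The remaining axiom is the \emph{triangle inequality}, which I would prove termwise against an intermediate path $r \in \mathcal{P}$. For the numerical term, I apply the ordinary triangle inequality on $\mathbb{R}$, $|u^p_j - u^q_j| \le |u^p_j - u^r_j| + |u^r_j - u^q_j|$ (and similarly for the lower bounds), then divide by the fixed positive constant $2(u_j - l_j)$ and sum over $j \in \mathcal{N}$. For the categorical term, I use the triangle inequality for the $\ell_1$ norm, $\|\bm c^p_j - \bm c^q_j\|_1 \le \|\bm c^p_j - \bm c^r_j\|_1 + \|\bm c^r_j - \bm c^q_j\|_1$, divided by $c_j$ and summed over $j \in \mathcal{C}$. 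For the class-label term, the discrete metric satisfies $\mathbbm{1}_{(k^p \neq k^q)} \le \mathbbm{1}_{(k^p \neq k^r)} + \mathbbm{1}_{(k^r \neq k^q)}$, which holds because if $k^p \neq k^q$ then $k^r$ cannot equal both; multiplying by $\lambda \ge 0$ preserves the inequality. Adding the three term-by-term inequalities gives $d(p,q) \le d(p,r) + d(r,q)$.

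I do not expect a genuine obstacle here, as each piece is a standard (pseudo)metric and the whole reduces to the well-known fact that nonnegative combinations of metrics are metrics. The only point requiring minor care is the identity of indiscernibles: one must argue that vanishing of all three terms recovers the \emph{full} path tuple, which relies on the representation of a path being exactly $(\bm u^p, \bm l^p, \bm C^p, k^p)$ and on each coordinate metric separating points. Since the discrete metric on $[K]$ and the $\ell_1$ metrics on $\mathbb{R}$ and $\{0,1\}^{c_j}$ all separate points, this is straightforward once stated explicitly.
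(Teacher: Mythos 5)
Your proof is correct and follows essentially the same route as the paper: both verify the axioms termwise, treating the numerical, categorical, and class-label terms as weighted $\ell_1$-type distances and the discrete metric respectively, and both reduce the triangle inequality to the standard componentwise triangle inequalities (the paper phrases this with element-wise weight vectors $\boldsymbol\mu,\boldsymbol\nu$, but the content is identical). If anything, you are slightly more careful than the paper on the identity of indiscernibles, explicitly noting that $d(p,q)=0$ recovers the full tuple $(\bm u^p,\bm l^p,\bm C^p,k^p)$ identifying the path, a point the paper dismisses as trivial.
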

\begin{proof}{Proof of Lemma~\ref{lem:metric-path}.}
    We need to show that \textbf{(i)} $d(p,p)=0$, \textbf{(ii)} if $p \neq q$, then $d(p,q)>0$, \textbf{(iii)} $d(p,q)=d(q,p)$, \textbf{(iv)} $d(p,q) \leq d(p,r) + d(r,q)$. Axioms \textbf{(i)}-\textbf{(iii)} are trivially satisfied. We now prove axiom \textbf{(iv)}. Denoting $[\boldsymbol \mu]_j = \frac{1}{2(u_j-l_j)}>0$, $[\boldsymbol \nu]_j = \frac{1}{c_j}>0$, and by $\boldsymbol x \odot \boldsymbol y$ the element-wise product between vectors $\boldsymbol x$ and $\boldsymbol  y$, we have:
    \begin{equation*}
        \begin{split}
            d(p,q) & \ = \ \| \boldsymbol \mu \odot (\boldsymbol u^p - \boldsymbol u^q) \|_1 + \| \boldsymbol \mu \odot (\boldsymbol l^p - \boldsymbol l^q) \|_1 + \| \boldsymbol \nu \odot (\boldsymbol C^p - \boldsymbol C^q) \|_1 + \lambda \cdot \mathbbm{1}_{(k^p \neq k^q)} \\
            d(p,r) & \ = \ \| \boldsymbol \mu \odot (\boldsymbol u^p - \boldsymbol u^r) \|_1 + \| \boldsymbol \mu \odot (\boldsymbol l^p - \boldsymbol l^r) \|_1 + \| \boldsymbol \nu \odot (\boldsymbol C^p - \boldsymbol C^r) \|_1 + \lambda \cdot \mathbbm{1}_{(k^p=k^r)} \\
            d(r,q) & \ = \ \| \boldsymbol \mu \odot (\boldsymbol u^r - \boldsymbol u^q) \|_1 + \| \boldsymbol \mu \odot (\boldsymbol l^r - \boldsymbol l^q) \|_1 + \| \boldsymbol \nu \odot (\boldsymbol C^r - \boldsymbol C^q) \|_1 + \lambda \cdot \mathbbm{1}_{(k^r=k^q)} \\
        \end{split}
    \end{equation*}
    We apply the triangle inequality to the first summand:
    \begin{equation*}
        \begin{split}
            \| \boldsymbol \mu \odot (\boldsymbol u^p - \boldsymbol u^q) \|_1 & \ = \  \sum_{j \in \mathcal{N}} | \mu_j (u_j^p - u_j^q)| \\
            & \ = \  \sum_{j \in \mathcal{N}}  \mu_j | u_j^p - u_j^q | \\
            & \ \leq \  \sum_{j \in \mathcal{N}}  \mu_j (| u_j^p - u_j^r| + | u_j^r - u_j^q |) \\
            & \ = \  \sum_{j \in \mathcal{N}} | \mu_j (u_j^p - u_j^r)| + | \mu_j (u_j^r - u_j^q)| \\
            & \ = \  \| \boldsymbol \mu \odot (\boldsymbol u^p - \boldsymbol u^r) \|_1 + \| \boldsymbol \mu \odot (\boldsymbol u^r - \boldsymbol u^q) \|_1.
        \end{split}
    \end{equation*}
    Working similarly, we can show that 
    \begin{equation*}
        \begin{split}
            & \| \boldsymbol \mu \odot (\boldsymbol l^p - \boldsymbol l^q) \|_1 \leq 
            \| \boldsymbol \mu \odot (\boldsymbol l^p - \boldsymbol l^r) \|_1 + 
            \| \boldsymbol \mu \odot (\boldsymbol l^r - \boldsymbol l^q) \|_1, \\
            & \| \boldsymbol \nu \odot (\boldsymbol C^p - \boldsymbol C^q) \|_1 \leq 
            \| \boldsymbol \nu \odot (\boldsymbol C^p - \boldsymbol C^r) \|_1 + 
            \| \boldsymbol \nu \odot (\boldsymbol C^r - \boldsymbol C^q) \|_1.
        \end{split}
    \end{equation*}
    Moreover, we can show by case analysis that
    
    \begin{equation*}
        \begin{split}
            \lambda \cdot \mathbbm{1}_{(k^p \neq k^q)} \leq \lambda \cdot \mathbbm{1}_{(k^p=k^r)} + \lambda \cdot \mathbbm{1}_{(k^r=k^q)}.
        \end{split}
    \end{equation*}
    By summing the above four inequalities, we get that $d(p,q)$ satisfies the triangle inequality (axiom \textbf{(iv)}) $d(p,q) \leq d(p,r) + d(r,q)$, and is therefore a metric. \hfill $\square$
\end{proof}

We now proceed with the proof of Proposition~\ref{lem:metric}:
% The following lemma, which we prove in Appendix~\ref{appx:proofs-metric}, suggests that the proposed distance measure satisfies the requirements of a metric: the distance from a tree to itself is zero, the distance between any two trees is nonnegative and symmetric, and it satisfies the triangle inequality.
% \begin{lemma} \label{lem:metric}
% Let $\mathcal{T}$ denote the set of all trees of maximum depth $D$ and $\mathbbm{T}_1, \mathbbm{T}_2 \in \mathcal{T}$.
% $d(\mathbbm{T}_1, \mathbbm{T}_2)$ is a metric mapping $\mathcal{T}\times \mathcal{T} \mapsto \mathbb{R}.$
% \end{lemma}
\begin{proof}{Proof of Proposition~\ref{lem:metric}}
    Let $\mathcal{T}$ denote the set of all trees of maximum depth $D$ and $\mathbbm{T}_1, \mathbbm{T}_2, \mathbbm{T}_3 \in \mathcal{T}$ with $\mathbbm{T}_1 \neq \mathbbm{T}_2 \neq \mathbbm{T}_3$. We assume, without loss of generality, that $\mathbbm{T}_1, \mathbbm{T}_2, \mathbbm{T}_3$ have the same number of paths $P$; for the more general case, we work similarly to the proof of Corollary~\ref{cor:relaxation} (by appending ``dummy'' paths to the trees with the fewer paths). 
    
    We need to show that \textbf{(i)} $d(\mathbbm{T}_1,\mathbbm{T}_1)=0$, \textbf{(ii)} $d(\mathbbm{T}_1, \mathbbm{T}_2)>0$, \textbf{(iii)} $d(\mathbbm{T}_1, \mathbbm{T}_2)=d(\mathbbm{T}_2, \mathbbm{T}_1)$, \textbf{(iv)} $d(\mathbbm{T}_1, \mathbbm{T}_3) \leq d(\mathbbm{T}_1, \mathbbm{T}_2) + d(\mathbbm{T}_2, \mathbbm{T}_3)$. Axiom \textbf{(i)} is satisfied by matching each path in Problem~\eqref{eq:distance} with itself and then applying Lemma~\ref{lem:metric-path}. Axiom \textbf{(ii)} is satisfied by observing that $\mathbbm{T}_1 \neq \mathbbm{T}_2$ (implying they differ in at least one path) and then applying Lemma~\ref{lem:metric-path}. Axiom \textbf{(iii)} is satisfied by again invoking Lemma~\ref{lem:metric-path} and noticing that the solution to Problem~\eqref{eq:distance} does not depend on the order of the input trees.  

    We now prove axiom \textbf{(iv)}. Let us index the paths in $\mathbbm{T}_1, \mathbbm{T}_2, \mathbbm{T}_3$ according to the optimal matching between $\mathbbm{T}_1$ and $\mathbbm{T}_2$ so that $ p^1_1 \leftrightarrow p^2_1, \ \dots, \ p^1_P \leftrightarrow p^2_P$, and between $\mathbbm{T}_2$  and $\mathbbm{T}_3$ so that $p^2_1 \leftrightarrow p^3_1, \ \dots, \ p^2_P \leftrightarrow p^3_P$; the subscript corresponds to the path index and the superscript corresponds to the tree index. Let us also consider a permutation of the paths in $\mathbbm{T}_3$ (denoted by $q$) according to the optimal matching between $\mathbbm{T}_1$ and $\mathbbm{T}_3$ so that $ p^1_1 \leftrightarrow q^3_1, \ \dots, \ p^1_P \leftrightarrow q^3_P$. By application of Lemma~\ref{lem:metric-path} we get that, for any $i \in [P]$,
    \begin{equation} \label{eqn:matched-paths}
        d(p_i^1, p_i^2) + d(p_i^2, p_i^3) \geq d(p_i^1, p_i^3).
    \end{equation}
    Taking the sum across all paths, yields
    \begin{equation*}
        \begin{split}
            d(\mathbbm{T}_1, \mathbbm{T}_2) + d(\mathbbm{T}_2, \mathbbm{T}_3)
            \ \overset{\mathrm{(a)}}{=} \ \sum_{i \in [P]} d(p_i^1, p_i^2) + d(p_i^2, p_i^3) 
            \ & \overset{\mathrm{(b)}}{\geq} \ \sum_{i \in [P]} d(p_i^1, p_i^3) \\
            \ & \overset{\mathrm{(c)}}{\geq} \ \sum_{i \in [P]} d(p_i^1, q_i^3)
            \ \overset{\mathrm{(d)}}{=} \ d(\mathbbm{T}_1, \mathbbm{T}_3),
        \end{split}
    \end{equation*}
    where \textbf{(a)} follows from the fact that $p^1_i \leftrightarrow p^2_i$ is the optimal matching between $\mathbbm{T}_1$ and $\mathbbm{T}_2$, and $p^2_i \leftrightarrow p^3_i$ is the optimal matching between $\mathbbm{T}_2$ and $\mathbbm{T}_3$, so summing the matched path distances gives the optimal objective value of Problem~\eqref{eq:distance}, that is, the tree distance; \textbf{(b)} follows from Equation \eqref{eqn:matched-paths}; \textbf{(c)} follows from the fact that $p^1_i \leftrightarrow q^3_i$ is the optimal matching between $\mathbbm{T}_1$ and $\mathbbm{T}_3$ whereas $p^1_i \leftrightarrow p^3_i$ is not; \textbf{(d)} follows from the definition of the tree distance (Problem~\eqref{eq:distance}). Therefore, $d(\mathbbm{T}_1, \mathbbm{T}_2)$ satisfies the triangle inequality (axiom \textbf{(iv)}) and is a metric. \hfill $\square$
\end{proof}

\subsection{Proof of Corollary~\ref{cor:relaxation}} \label{appx:proofs-relaxation}

\begin{proof}{Proof of Corollary~\ref{cor:relaxation}.}
    Given trees $\mathbbm{T}_1$ and $\mathbbm{T}_2$ with $T_1 > T_2$, we append $T_1-T_2$ ``dummy paths'' to $\mathbbm{T}_2$,  which we denote by $\mathcal{D}(\mathbb{T}_2)$, such that the distance from any dummy path to any path $p \in \mathcal{P}(\mathbb{T}_1)$ is equal to $w(p)$. That is, $d(p,q)=w(p),\ \forall p \in \mathcal{P}(\mathbb{T}_1), q \in \mathcal{D}(\mathbb{T}_2)$. We refer to this representation for Problem~\eqref{eq:distance} as Problem~\eqnum\label{eq:distance-reform}. We then rewrite the linear relaxation of Problem~\eqref{eq:distance-reform} as:
    \begin{equation} \label{eq:distance-reform-relax}
        \begin{split}
            d(\mathbbm{T}_1, \mathbbm{T}_2)
            \ =\ \min_{\bm x} & \quad \sum_{p \in \mathcal{P}(\mathbb{T}_1)} \sum_{q \in \mathcal{P}(\mathbb{T}_2) \cup \mathcal{D}(\mathbb{T}_2)} d(p,q) x_{pq} \\
            \text{s.t.} & \quad \sum_{q \in \mathcal{P}(\mathbb{T}_2)} x_{pq} = 1, \quad \forall p \in \mathcal{P}(\mathbb{T}_1) \\
            & \quad \sum_{p \in \mathcal{P}(\mathbb{T}_1)} x_{pq} = 1, \quad \forall q \in \mathcal{P}(\mathbb{T}_2) \\
            & \quad x_{pq} \geq 0, \qquad \forall p \in \mathcal{P}(\mathbb{T}_1), \quad \forall q \in \mathcal{P}(\mathbb{T}_2) \cup \mathcal{D}(\mathbb{T}_2)
        \end{split}
    \end{equation}
    The coefficient matrix of Problem~\eqref{eq:distance-reform-relax} is totally unimodular and thus every extreme point of the feasible region is integral. Moreover, since Problem~\eqref{eq:distance-reform-relax} is a linear optimization problem, the optimum will be attained at an extreme point. Therefore, there exists an optimum to Problem~\eqref{eq:distance-reform-relax} that is integral. Since Problem~\eqref{eq:distance-reform-relax} is more constrained than Problem~\eqref{eq:distance-reform}, the integral optimum to the former also solves the latter, as well as the original Problem~\eqref{eq:distance}. The cases with $T_1 \leq T_2$ can be proved similarly. \hfill $\square$
\end{proof}

\subsection{Proof of Proposition~\ref{lem:upperbound}} \label{appx:proofs-upperbound}

\begin{proof}{Proof of Proposition~\ref{lem:upperbound}.}
    The maximum number of paths in a tree of depth $D$ is $2^D$. The maximum distance between two paths of depth $D$ is $2D+\lambda$: it occurs when the $D$ splits are performed on $2D$ distinct features, each split is performed on values that are $\epsilon$-close to the upper or lower bound for the corresponding feature (with $\epsilon \rightarrow 0$), and the resulting class labels are also different. Provided that the number of features and classes are large enough, we can construct two trees that achieve this upper bound. \hfill $\square$
\end{proof}

\section{Extended Sensitivity Analysis of the Proposed Tree Distance} \label{appx:numerical-experiments-distance}

We present the full numerical study of the key properties of the proposed tree distance metric, which we briefly described in Section \ref{ss:numerical-experiments-distance}. We use the case studies of Section \ref{s:case-studies}. We test the sensitivity of the proposed distance metric to two types of perturbations --- direct and indirect ones. Direct perturbations refer to immediate interventions and changes in the tree structure. Indirect perturbations refer to modifications in the training data. In the remainder of this section, we more concretely describe the methodology we use to obtain each type of perturbation and discuss the corresponding results, shown in Figure \ref{fig:distance}.

To investigate the sensitivity of the proposed distance metric to direct perturbations, we repeat the following process 100 times independently for each case study. In each repetition, we train a decision tree model using 5-fold cross-validation. We then randomly perturb each threshold $t$ in the tree by a percentage $\theta$ drawn uniformly at random from $[0,\theta_{\max}]$, i.e., $t_{\text{pert}}=t\cdot(1+2\cdot \theta_{\max} \cdot \theta - \theta_{\max})$. We vary $\theta_{\max} \in [0,1]$ and measure the distance between the original and the perturbed trees in each case. We report the mean and standard deviation of the distance, obtained across all repetitions and case studies. Figure \ref{fig:distance}(left) suggests that, as the amount of perturbation increases, the distance increases monotonically. The relationship is linear and the mean distance, expressed as a percentage of the maximum possible distance between any two trees of the given depth, varies between 1.5\%, when the maximum perturbation is 10\%, and 12.5\%, when the maximum perturbation is 100\%. 

We proceed to study the sensitivity of the proposed distance metric to indirect perturbations, which refer to modifications in the training data. We repeat the following process 10 times for each case study. In each repetition, we randomly permute the data and keep the first half of them to train a decision tree model using 5-fold cross-validation. Then, for $\theta \in [0.2,\dots,1.]$, we sequentially replace a $\theta$ fraction of the first half of the data with the same amount of data taken from the second half, we train a new decision tree model, and we measure the distance between the two. We report the mean and standard deviation of the distance, obtained across all repetitions and case studies. Figure \ref{fig:distance}(right) shows that, as $\theta$ increases, the distance has an increasing trend, albeit non-monotonic. We attribute this lack of monotonicity to the inherent instability of decision tree models \citep{breiman1996heuristics}, rather than to the proposed distance metric. We remark that the distance percentages in this experiment are low because the upper bound on the distance was obtained using the largest maximum allowable tree depth examined during cross-validation (rather than the actual depth of any of the two trees).

\begin{figure}[!ht]
\begin{subfigure}{0.48\textwidth}
\includegraphics[width=\linewidth]{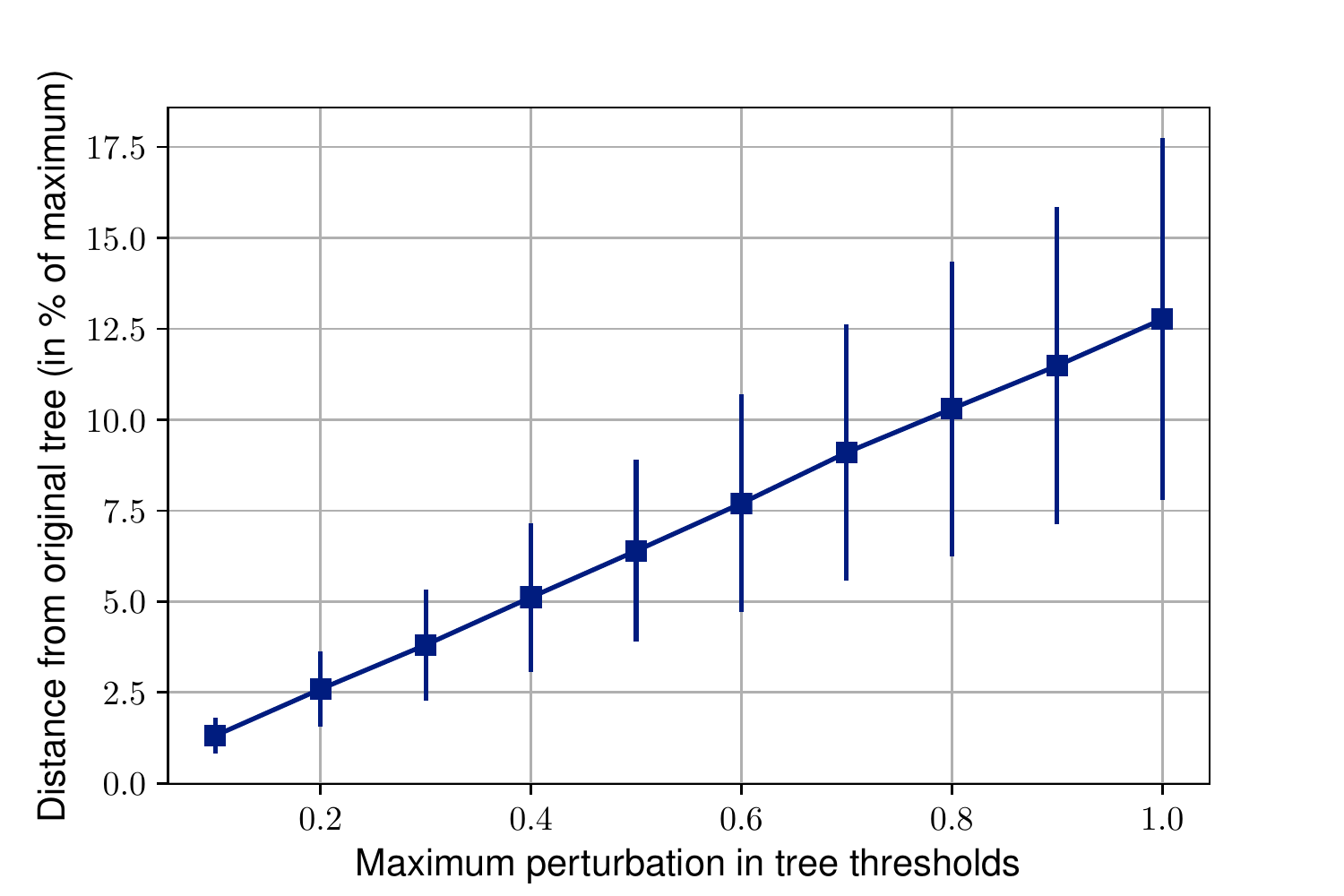} 
\end{subfigure}\hspace*{\fill}
\begin{subfigure}{0.48\textwidth}
\includegraphics[width=\linewidth]{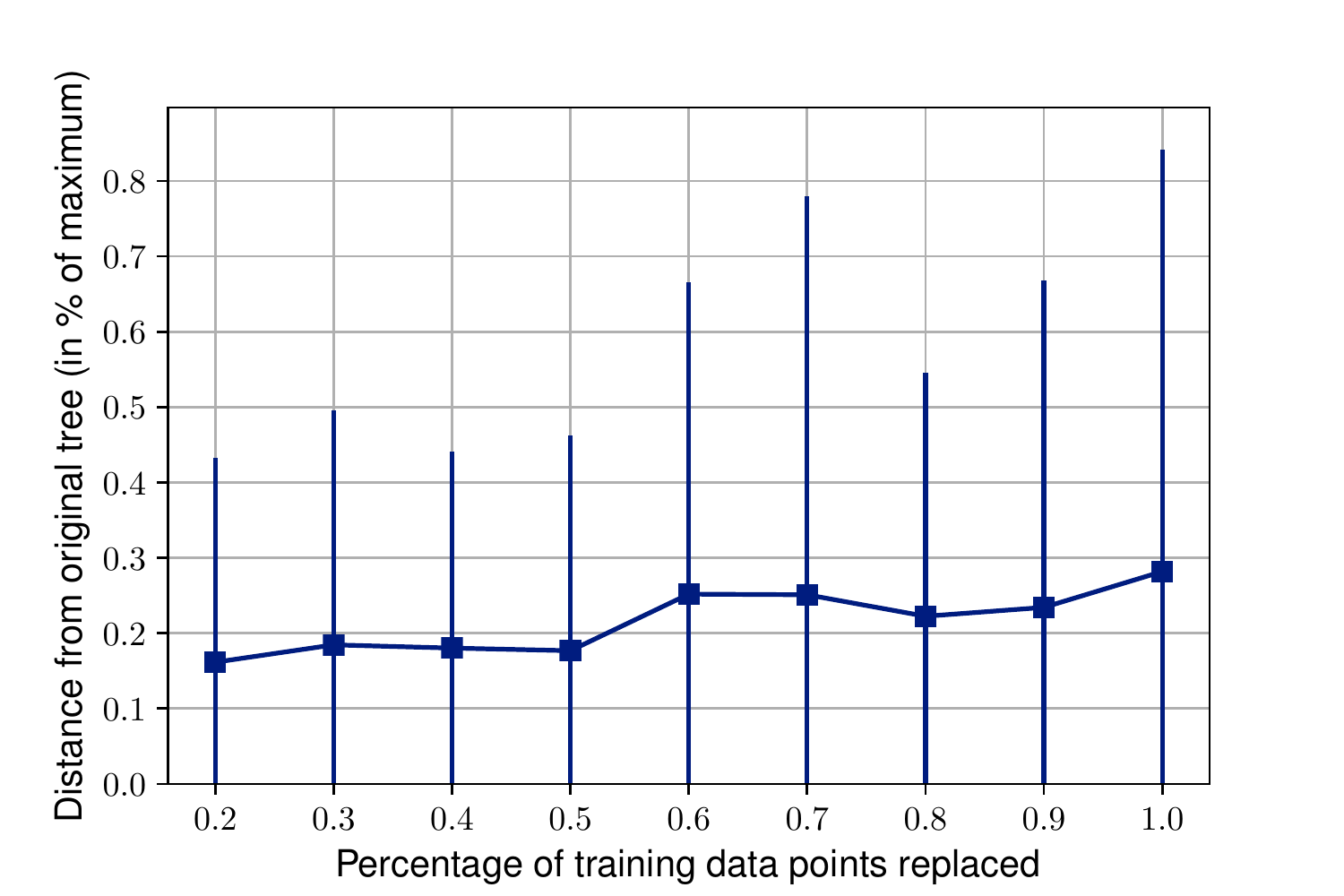} 
\end{subfigure}
\caption{Mean and standard deviation of the distance between the original tree and perturbed tree. The perturbed trees are obtained by randomly perturbing the split thresholds (left) or replacing a fraction of the training points (right). We aggregate across multiple repetitions (i.e., perturbations) for each dataset. } \label{fig:distance}
\end{figure}

\end{APPENDICES}

%%%%%%%%%%%%%%%%%
\end{document}